\documentclass{llncs}
%%%%%%%%%%%%%%%%%%%%%%%%%%%%%%%%%%%%%%%%%%%%%%%%%%%%%%%%%%%%%%%%%%%%%%%%%%%%%%%%%%%%%%%%%%%%%%%%%%%%%%%%%%%%%%%%%%%%%%%%%%%%%%%%%%%%%%%%%%%%%%%%%%%%%%%%%%%%%%%%%%%%%%%%%%%%%%%%%%%%%%%%%%%%%%%%%%%%%%%%%%%%%%%%%%%%%%%%%%%%%%%%%%%%%%%%%%%%%%%%%%%%%%%%%%%%
\usepackage{amsmath}
\usepackage{amssymb}
\usepackage{amsfonts}
\usepackage{makeidx}

\setcounter{MaxMatrixCols}{10}
%TCIDATA{OutputFilter=LATEX.DLL}
%TCIDATA{Version=5.50.0.2960}
%TCIDATA{<META NAME="SaveForMode" CONTENT="1">}
%TCIDATA{BibliographyScheme=Manual}
%TCIDATA{Created=Sunday, December 12, 2004 18:28:47}
%TCIDATA{LastRevised=Monday, June 30, 2014 18:39:45}
%TCIDATA{<META NAME="GraphicsSave" CONTENT="32">}
%TCIDATA{<META NAME="DocumentShell" CONTENT="Books\SW\Springer-Verlag Heidelberg LLNCS">}
%TCIDATA{Language=American English}
%TCIDATA{CSTFile=llncs.cst}

\pagestyle{headings}
\input{tcilatex}
\begin{document}

\frontmatter
\author{Andreas Maurer}

\mainmatter

%TCIMACRO{%
%\TeXButton{Chapter title}{\title{A chain rule for the expected suprema of Gaussian processes}}}%
%BeginExpansion
\title{A chain rule for the expected suprema of Gaussian processes}%
%EndExpansion

\titlerunning{Chain rule} % abbreviated title (for running head)
% also used for the TOC unless
% \toctitle is used

\authorrunning{A. Maurer}

\institute{Adalbertstrasse 55\\D-80799 M\"unchen, Germany\\
am@andreas-maurer.eu
}

%TCIMACRO{\TeXButton{Make Chapter title}{\maketitle}}%
%BeginExpansion
\maketitle%
%EndExpansion

%TCIMACRO{\TeXButton{Begin abstract}{\begin{abstract}}}%
%BeginExpansion
\begin{abstract}%
%EndExpansion
The expected supremum of a Gaussian process indexed by the image of an index
set under a function class is bounded in terms of separate properties of the
index set and the function class. The bound is relevant to the estimation of
nonlinear transformations or the analysis of learning algorithms whenever
hypotheses are chosen from composite classes, as \ is the case for
multi-layer models.%
%TCIMACRO{\TeXButton{End abstract}{\end{abstract}}}%
%BeginExpansion
\end{abstract}%
%EndExpansion

\section{Introduction}

Rademacher and Gaussian averages (\cite{Bartlett 2002}, see also \cite%
{Kolchinskii 2000},\cite{Mendelson 2001}) provide an elegant method to
demonstrate generalization for a wide variety of learning algorithms and are
particularly well suited to analyze kernel machines, where the use of more
classical methods relying on covering numbers becomes cumbersome.

To briefly describe the use of Gaussian averages (Rademacher averages will
not concern us), let $Y\subseteq 
%TCIMACRO{\U{211d} }%
%BeginExpansion
\mathbb{R}
%EndExpansion
^{n}$ and let $\mathbf{\gamma }$ be a vector $\mathbf{\gamma }=\left( \gamma
_{1},...,\gamma _{n}\right) $ of independent standard normal variables. We
define the (expected supremum of the) Gaussian average of $Y$ as%
\begin{equation}
G\left( Y\right) =\mathbb{E}\sup_{\mathbf{y}\in Y}\left\langle \mathbf{%
\gamma },\mathbf{y}\right\rangle ,  \label{Gaussian average 1st definition}
\end{equation}%
where $\left\langle .,.\right\rangle $ denotes the inner product in $%
%TCIMACRO{\U{211d} }%
%BeginExpansion
\mathbb{R}
%EndExpansion
^{n}$. Consider a loss class $\tciFourier $ of functions $f:\mathcal{X}%
\rightarrow 
%TCIMACRO{\U{211d} }%
%BeginExpansion
\mathbb{R}
%EndExpansion
$, where $\mathcal{X}$ is some space of examples (such as input-output
pairs), a sample $\mathbf{x}=\left( x_{1},...,x_{n}\right) \in \mathcal{X}%
^{n}$ of observations and write $\tciFourier \left( \mathbf{x}\right) $ for
the subset of $%
%TCIMACRO{\U{211d} }%
%BeginExpansion
\mathbb{R}
%EndExpansion
^{n}$ given by $\tciFourier \left( \mathbf{x}\right) =\left\{ \left( f\left(
x_{1}\right) ,...,f\left( x_{n}\right) \right) :f\in \tciFourier \right\} $.
Then we have the following result \cite{Bartlett 2002}.

\begin{theorem}
\label{Theorem Rademacher bound}Let the members of $\tciFourier $ take
values in $\left[ 0,1\right] $ and let $X,X_{1},...,X_{n}$ be iid random
variables with values in $\mathcal{X}$, $\mathbf{X}=\left(
X_{1},...,X_{n}\right) $. Then for $\delta >0$ with probability at least $%
1-\delta $ we have for every $f\in \tciFourier $ that%
\begin{equation*}
\mathbb{E}f\left( X\right) \leq \frac{1}{n}\sum f\left( X_{i}\right) +\frac{%
\sqrt{2\pi }}{n}G\left( \tciFourier \left( \mathbf{X}\right) \right) +\sqrt{%
\frac{9\ln 2/\delta }{2n}},
\end{equation*}%
where the expectation in the definition (\ref{Gaussian average 1st
definition}) of $G\left( \tciFourier \left( \mathbf{X}\right) \right) $ is
conditional to the sample $\mathbf{X}$.\bigskip 
\end{theorem}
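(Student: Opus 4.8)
The plan is to combine McDiarmid's bounded-difference inequality with a symmetrization argument and then to pass from Rademacher to Gaussian averages. For $\mathbf{x}=\left( x_{1},\dots ,x_{n}\right) \in \mathcal{X}^{n}$ put
\[
\Phi \left( \mathbf{x}\right) =\sup_{f\in \tciFourier }\left( \mathbb{E}f\left( X\right) -\frac{1}{n}\sum_{i}f\left( x_{i}\right) \right) .
\]
Since every $f\in \tciFourier $ takes values in $\left[ 0,1\right] $, replacing one coordinate of $\mathbf{x}$ changes $\Phi $ by at most $1/n$, so McDiarmid's inequality gives, with probability at least $1-\delta /2$,
\[
\Phi \left( \mathbf{X}\right) \leq \mathbb{E}\Phi \left( \mathbf{X}\right) +\sqrt{\frac{\ln 2/\delta }{2n}}.
\]

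To bound $\mathbb{E}\Phi \left( \mathbf{X}\right) $ I would use symmetrization. With an independent ghost sample $\mathbf{X}^{\prime }$ one writes $\mathbb{E}f\left( X\right) =\mathbb{E}_{\mathbf{X}^{\prime }}\frac{1}{n}\sum_{i}f\left( X_{i}^{\prime }\right) $, pulls the ghost expectation out of the supremum by Jensen, and inserts Rademacher signs $\varepsilon _{i}$ (legitimate because each pair $\left( X_{i},X_{i}^{\prime }\right) $ is exchangeable), obtaining
\[
\mathbb{E}\Phi \left( \mathbf{X}\right) \leq \frac{2}{n}\,\mathbb{E}_{\mathbf{X}}\,\mathbb{E}_{\varepsilon }\sup_{f\in \tciFourier }\sum_{i}\varepsilon _{i}f\left( X_{i}\right) .
\]
To replace the Rademacher average by $G$, write $\gamma _{i}=|\gamma _{i}|\,\operatorname{sign}\gamma _{i}$, observe that $\operatorname{sign}\gamma _{i}$ is a Rademacher sign independent of $|\gamma _{i}|$, and use Jensen in the $|\gamma _{i}|$ (the supremum of linear functions is convex) together with $\mathbb{E}|\gamma _{i}|=\sqrt{2/\pi }$; this yields $\mathbb{E}_{\varepsilon }\sup_{f}\sum_{i}\varepsilon _{i}f\left( x_{i}\right) \leq \sqrt{\pi /2}\;G\left( \tciFourier \left( \mathbf{x}\right) \right) $, and hence $\mathbb{E}\Phi \left( \mathbf{X}\right) \leq \frac{\sqrt{2\pi }}{n}\,\mathbb{E}_{\mathbf{X}}G\left( \tciFourier \left( \mathbf{X}\right) \right) $.

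Finally the expected Gaussian average must be replaced by the data-dependent one appearing in the statement, and this calls for a second application of McDiarmid, to the map $\mathbf{x}\mapsto G\left( \tciFourier \left( \mathbf{x}\right) \right) $. The point to get right here is that for fixed $\mathbf{\gamma }$ a change in one coordinate $x_{i}$ moves $\sup_{f}\left\langle \mathbf{\gamma },\cdot \right\rangle $ by at most $|\gamma _{i}|$, so the bounded difference of $\frac{1}{n}G\left( \tciFourier \left( \mathbf{x}\right) \right) $ is $\mathbb{E}|\gamma _{i}|/n=\sqrt{2/\pi }/n$ rather than the crude $1/n$; McDiarmid then gives, with probability at least $1-\delta /2$,
\[
\frac{\sqrt{2\pi }}{n}\,\mathbb{E}_{\mathbf{X}}G\left( \tciFourier \left( \mathbf{X}\right) \right) \leq \frac{\sqrt{2\pi }}{n}\,G\left( \tciFourier \left( \mathbf{X}\right) \right) +\sqrt{\frac{2\ln 2/\delta }{n}}.
\]
A union bound over the two McDiarmid events together with the identity $1/\sqrt{2}+\sqrt{2}=3/\sqrt{2}$ collapses the two residual terms into $\sqrt{\frac{9\ln 2/\delta }{2n}}$, which is the asserted bound. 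The argument is otherwise routine; the place demanding genuine care is the bookkeeping of constants, and in particular the observation that $G$ itself concentrates with per-coordinate sensitivity $\sqrt{2/\pi }$, which is what makes the final constant $9/2$ rather than something larger.
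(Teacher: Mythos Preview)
The paper does not supply its own proof of this theorem; it is quoted as a known result with a reference to Bartlett and Mendelson (2002), so there is nothing to compare against in the paper itself. Your argument is the standard one underlying that reference --- McDiarmid for $\Phi$, symmetrization, the Rademacher-to-Gaussian comparison via $\gamma_i=|\gamma_i|\operatorname{sign}\gamma_i$ and Jensen, and a second McDiarmid for $G(\tciFourier(\mathbf{x}))$ --- and it is correct. In particular the per-coordinate sensitivity $\sqrt{2/\pi}$ for $G(\tciFourier(\mathbf{x}))$ is justified as you say, and the arithmetic $\sqrt{\ln(2/\delta)/(2n)}+\sqrt{2\ln(2/\delta)/n}=\sqrt{9\ln(2/\delta)/(2n)}$ recovers the exact constant in the statement.
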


The utility of Gaussian averages is not limited to functions with values in $%
\left[ 0,1\right] $. For real functions $\phi $ with Lipschitz constant $%
L\left( \phi \right) $\ we have $G\left( \left( \phi \circ \tciFourier
\right) \left( \mathbf{x}\right) \right) $ $\leq L\left( \phi \right)
~G\left( \tciFourier \left( \mathbf{x}\right) \right) $ (see also Slepian's
Lemma, \cite{Ledoux 1991}, \cite{Boucheron 2013}), where $\phi \circ
\tciFourier $ is the class $\left\{ x\mapsto \phi \left( f\left( x\right)
\right) :f\in \tciFourier \right\} $.\bigskip 

The inequality $G\left( \left( \phi \circ \tciFourier \right) \left( \mathbf{%
x}\right) \right) \leq L\left( \phi \right) ~G\left( \tciFourier \left( 
\mathbf{x}\right) \right) $, which in the above form holds also for
Rademacher averages \cite{Meir Zhang}, is extremely useful and in part
responsible for the success of these complexity measures. For Gaussian
averages it holds in a more general sense: if $\phi :%
%TCIMACRO{\U{211d} }%
%BeginExpansion
\mathbb{R}
%EndExpansion
^{n}\rightarrow 
%TCIMACRO{\U{211d} }%
%BeginExpansion
\mathbb{R}
%EndExpansion
^{m}$ has Lipschitz constant $L\left( \phi \right) $ with respect to the
Euclidean distances, then $G\left( \phi \left( Y\right) \right) \leq L\left(
\phi \right) G\left( Y\right) $. This is a direct consequence of Slepian's
Lemma and can be applied to the analysis of clustering or learning to learn (%
\cite{MP-ieee} and \cite{Maurer 2009}).

But what if we also want some freedom in the choice of $\phi $\textit{\ after%
} seeing the data? If the class of Lipschitz functions considered has small
cardinality, a union bound can be used. If it is very large one can try to
use covering numbers, but the matter soon becomes quite complicated and
destroys the elegant simplicity of the method.\bigskip

These considerations lead to a more general question: given a set $Y\subset 
%TCIMACRO{\U{211d} }%
%BeginExpansion
\mathbb{R}
%EndExpansion
^{n}$ and a class $\tciFourier $ of functions $f:\mathcal{%
%TCIMACRO{\U{211d} }%
%BeginExpansion
\mathbb{R}
%EndExpansion
}^{n}\mathcal{\rightarrow 
%TCIMACRO{\U{211d} }%
%BeginExpansion
\mathbb{R}
%EndExpansion
}^{m}$, how can we bound the Gaussian average $G\left( \tciFourier \left(
Y\right) \right) =G\left( \left\{ f\left( y\right) :f\in \tciFourier ,y\in
Y\right\} \right) $ in terms of separate properties of $Y$ and $\tciFourier $%
, properties which should preferably very closely resemble Gaussian
averages? If $\mathcal{H}$ is some class of functions mapping samples to $%
%TCIMACRO{\U{211d} }%
%BeginExpansion
\mathbb{R}
%EndExpansion
^{n}$ and $Y=\mathcal{H}\left( \mathbf{x}\right) $, then the bound is on $%
G\left( \tciFourier \left( Y\right) \right) =G\left( \left( \tciFourier
\circ \mathcal{H}\right) \left( \mathbf{x}\right) \right) $, so our question
is relevant to the estimation of composite functions in general. Such
estimates are necessary for multitask feature-learning, where $\mathcal{H}$
is a class of feature maps and $\tciFourier $ is vector-valued, with
components chosen independently for each task. Other potential applications
are to the currently popular subject of deep learning, where we consider
functional concatenations as in $\mathcal{F}_{M}\mathcal{\circ F}_{M-1}%
\mathcal{\circ }...\circ \mathcal{F}_{1}$. \bigskip

The present paper gives a preliminary answer. To state it we introduce some
notation. We will always take $\mathbf{\gamma }=\left( \gamma
_{1},...\right) $ to be a random vector whose components are independent
standard normal variables, while $\left\Vert .\right\Vert $ and $%
\left\langle .,.\right\rangle $ denote norm and inner product in a Euclidean
space, the dimension of which is determined by context, as is the dimension
of the vector $\mathbf{\gamma }$.

\begin{definition}
If $Y\subseteq 
%TCIMACRO{\U{211d} }%
%BeginExpansion
\mathbb{R}
%EndExpansion
^{n}$ we set%
\begin{equation*}
D\left( Y\right) =\sup_{\mathbf{y},\mathbf{y}^{\prime }\in Y}\left\Vert 
\mathbf{y}-\mathbf{y}^{\prime }\right\Vert \text{ and }G\left( Y\right) =%
\mathbb{E}\sup_{\mathbf{y}\in Y}\left\langle \mathbf{\gamma },\mathbf{y}%
\right\rangle .
\end{equation*}%
If $\tciFourier $ is a class of functions $f:Y\rightarrow 
%TCIMACRO{\U{211d} }%
%BeginExpansion
\mathbb{R}
%EndExpansion
^{m}$ we set%
\begin{eqnarray*}
L\left( \tciFourier ,Y\right) &=&\sup_{\mathbf{y},\mathbf{y}^{\prime }\in Y,~%
\mathbf{y}\neq \mathbf{y}^{\prime }}\sup_{f\in \tciFourier }\frac{\left\Vert
f\left( \mathbf{y}\right) -f\left( \mathbf{y}^{\prime }\right) \right\Vert }{%
\left\Vert \mathbf{y}-\mathbf{y}^{\prime }\right\Vert }\text{ and} \\
R\left( \tciFourier ,Y\right) &=&\sup_{\mathbf{y},\mathbf{y}^{\prime }\in Y,~%
\mathbf{y}\neq \mathbf{y}^{\prime }}\mathbb{E}\sup_{f\in \tciFourier }\frac{%
\left\langle \mathbf{\gamma },f\left( \mathbf{y}\right) -f\left( \mathbf{y}%
^{\prime }\right) \right\rangle }{\left\Vert \mathbf{y}-\mathbf{y}^{\prime
}\right\Vert }.
\end{eqnarray*}%
We also write $\tciFourier \left( Y\right) =\left\{ f\left( \mathbf{y}%
\right) :f\in \tciFourier ,\mathbf{y}\in Y\right\} $. When there is no
ambiguity we write $L\left( \tciFourier \right) =L\left( \tciFourier
,Y\right) $ and $R\left( \tciFourier \right) =R\left( \tciFourier ,Y\right) $%
.\bigskip
\end{definition}

Then $D\left( Y\right) $ is the diameter of $Y$, and $G\left( Y\right) $ is
the Gaussian average already introduced above. $L\left( \tciFourier \right) $
is the smallest Lipschitz constant acceptable for all $f\in \tciFourier $,
and the more unusual quantity $R\left( \tciFourier \right) $ can be viewed
as a Gaussian average of Lipschitz quotients. In section \ref{Section some
properties of R} we give some properties of $R\left( \tciFourier \right) $.
Our main result is the following chain rule.\bigskip

\begin{theorem}
\label{Theorem Main Main}Let $Y\subset 
%TCIMACRO{\U{211d} }%
%BeginExpansion
\mathbb{R}
%EndExpansion
^{n}$ be finite, $\tciFourier $ a finite class of functions $f:Y\rightarrow 
%TCIMACRO{\U{211d} }%
%BeginExpansion
\mathbb{R}
%EndExpansion
^{m}$. Then there are universal constants $C_{1}$ and $C_{2}$ such that for
any $\mathbf{y}_{0}\in Y$%
\begin{equation}
G\left( \tciFourier \left( Y\right) \right) \leq C_{1}L\left( \tciFourier
\right) G\left( Y\right) +C_{2}D\left( Y\right) R\left( \tciFourier \right)
+G\left( \tciFourier \left( \mathbf{y}_{0}\right) \right) .
\label{Main inequality}
\end{equation}
\end{theorem}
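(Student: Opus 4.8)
The plan is to control the Gaussian process on $\tciFourier(Y)$ by a generic chaining / Dudley-type argument, but applied cleverly to the product structure of the index set $\tciFourier \times Y$. First I would fix $\mathbf{y}_0 \in Y$ and decompose, for $f \in \tciFourier$ and $\mathbf{y} \in Y$,
\begin{equation*}
f(\mathbf{y}) = \bigl(f(\mathbf{y}) - f(\mathbf{y}_0)\bigr) + f(\mathbf{y}_0),
\end{equation*}
so that by subadditivity of $G$ (the supremum of a sum is at most the sum of suprema),
\begin{equation*}
G\left(\tciFourier(Y)\right) \leq G\left(\left\{f(\mathbf{y}) - f(\mathbf{y}_0) : f \in \tciFourier,\ \mathbf{y} \in Y\right\}\right) + G\left(\tciFourier(\mathbf{y}_0)\right).
\end{equation*}
This already isolates the additive term $G(\tciFourier(\mathbf{y}_0))$, so the work is to bound the centered set $Z = \{f(\mathbf{y}) - f(\mathbf{y}_0)\}$ by $C_1 L(\tciFourier) G(Y) + C_2 D(Y) R(\tciFourier)$.

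For the centered part I would estimate the canonical metric of the Gaussian process on $Z$. For two pairs $(f,\mathbf{y})$ and $(f',\mathbf{y}')$, the $L^2$-distance between the corresponding Gaussian variables is $\|f(\mathbf{y}) - f'(\mathbf{y}')\|$, and I would split this via the triangle inequality through the intermediate point $f(\mathbf{y}')$ (or $f'(\mathbf{y})$):
\begin{equation*}
\|f(\mathbf{y}) - f'(\mathbf{y}')\| \leq \|f(\mathbf{y}) - f(\mathbf{y}')\| + \|f(\mathbf{y}') - f'(\mathbf{y}')\| \leq L(\tciFourier)\|\mathbf{y} - \mathbf{y}'\| + \|f(\mathbf{y}') - f'(\mathbf{y}')\|.
\end{equation*}
The first piece behaves like $L(\tciFourier)$ times the metric on $Y$, which by Slepian/Sudakov–Fernique-type comparison should contribute $C_1 L(\tciFourier) G(Y)$. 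The second piece is the increment of the function class evaluated at a fixed point; to handle the sup over this uniformly in $\mathbf{y}'$ I would bring in $R(\tciFourier)$, noting that $R$ is precisely designed to bound Gaussian averages of the differences $f(\mathbf{y}) - f(\mathbf{y}')$ normalized by $\|\mathbf{y} - \mathbf{y}'\|$, and the normalization is absorbed into the diameter $D(Y)$. Concretely I expect to run Dudley's entropy integral on $Z$ with the metric split above, so that $\log N(Z,\varepsilon)$ splits into a covering-number contribution from $Y$ at scale $\varepsilon / L(\tciFourier)$ and one from $\tciFourier$-increments at scale $\varepsilon / D(Y)$, and then convert each entropy integral back into a Gaussian average — either directly or via a majorizing-measure / generic-chaining argument if the plain Dudley bound loses too much.

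The main obstacle, I expect, will be the second term: converting the entropy of the $\tciFourier$-increment process into the quantity $D(Y)R(\tciFourier)$ rather than something weaker like $D(Y)R(\tciFourier)\sqrt{\log|Y|}$ or an integral of $R$ over scales. The definition of $R(\tciFourier)$ only gives a bound on the Gaussian average at a \emph{single pair} of points $\mathbf{y},\mathbf{y}'$, so one must control the process $(f,\mathbf{y}') \mapsto \langle \boldsymbol{\gamma}, f(\mathbf{y}') - f(\mathbf{y}_0)\rangle$ jointly over $\mathbf{y}'$; here is where chaining over $Y$ must be done carefully, presumably re-anchoring the $\tciFourier$-increment at successive approximating points of a chain in $Y$ and using $R(\tciFourier)$ at each link, with the link lengths telescoping against $G(Y)$ or $D(Y)$. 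I would also need to check that the finiteness hypotheses on $Y$ and $\tciFourier$ are used only to make all suprema measurable finite maxima so that Slepian's lemma and the chaining bounds apply verbatim, and that the universal constants $C_1, C_2$ emerging from Dudley's inequality do not depend on $n$, $m$, $|Y|$ or $|\tciFourier|$.
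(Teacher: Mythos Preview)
Your initial decomposition isolating $G(\tciFourier(\mathbf{y}_0))$ is exactly what the paper does. After that, however, the approaches diverge, and the obstacle you yourself flag is where your plan has a genuine gap.

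Splitting the canonical metric on $\tciFourier\times Y$ and running Dudley over the product would require an entropy bound on the $\tciFourier$-increment process, and as you note, $R(\tciFourier)$ is not an entropy quantity: it controls the \emph{mean} of $\sup_{f}\langle\boldsymbol{\gamma},f(\mathbf{y})-f(\mathbf{y}')\rangle$ at a single pair, not its tails or its covering numbers. Your proposed fix of ``re-anchoring at successive chain points and using $R(\tciFourier)$ at each link'' is the right instinct, but using $R$ only bounds the expectation of each link; to sum the links and take a union bound you also need sub-Gaussian concentration of each $\sup_{f}\langle\boldsymbol{\gamma},f(\pi_k(\mathbf{y}))-f(\pi_{k-1}(\mathbf{y}))\rangle$ around its mean. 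You never say where that concentration comes from, and without it the chaining produces an extra $\sqrt{\log|\tciFourier|}$ or similar.

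The paper supplies exactly this missing ingredient and organizes the argument differently. It does \emph{not} chain over $\tciFourier\times Y$; instead it defines a process indexed by $Y$ alone,
\[
X_{\mathbf{y}}=\sup_{f\in\tciFourier}\langle\boldsymbol{\gamma},f(\mathbf{y})\rangle,
\]
and observes that $\mathbf{z}\mapsto\sup_{f}\langle\mathbf{z},f(\mathbf{y})-f(\mathbf{y}')\rangle$ is Lipschitz in $\mathbf{z}$ with constant at most $L(\tciFourier)\|\mathbf{y}-\mathbf{y}'\|$. Gaussian concentration (Tsirelson--Ibragimov--Sudakov) then gives a sub-Gaussian tail for $X_{\mathbf{y}}-X_{\mathbf{y}'}$ around its mean, and the mean is $\leq R(\tciFourier)\|\mathbf{y}-\mathbf{y}'\|$ by definition of $R$. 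A short recentring lemma absorbs the mean shift into a multiplicative constant $K=\exp\bigl(R(\tciFourier)^2/(2L(\tciFourier)^2)\bigr)$ in front of the sub-Gaussian tail, and a generic-chaining theorem for processes with such tails (over $Y$ only, via the majorizing-measure theorem) yields $C'G(Y)+C''D(Y)\sqrt{\ln K}$, which is precisely $C_1 L(\tciFourier)G(Y)+C_2 D(Y)R(\tciFourier)$ after rescaling. The point you are missing is that Gaussian concentration for suprema is what converts $R(\tciFourier)$ from a mere expectation bound into a usable tail parameter.
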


We make some general remarks on the implications of our result.

1. The requirement of finiteness for $Y$ and $\tciFourier $ is a
simplification to avoid issues of measurability. The cardinality of these
sets plays no role.

2. The constants $C_{1}$ and $C_{2}$ as they result from the proof are
rather large, because they accumulate the constants of Talagrand's
majorizing measure theorem and generic chaining \cite{Ledoux 1991}\cite%
{Talagrand 1992}\cite{Talagrand 2001}\cite{Talagrand 2005}. This is a major
shortcoming and the reason why our result is regarded as preliminary. Is
there another proof of a similar result, avoiding majorizing measures and
resulting in smaller constants? This question is the subject of current
research.

3. The first term on the right hand side of (\ref{Main inequality})
describes the complexity inherited from the bottom layer $Y$ (which we may
think of as $\mathcal{H}\left( \mathbf{x}\right) $), and it depends on the
top layer $\tciFourier $ only through the Lipschitz constant $L\left(
\tciFourier \right) $. The other two terms represent the complexity of the
top layer, depending on the bottom layer only through the diameter $D\left(
Y\right) $ of $Y$. If $Y$ has unit diameter and the functions in $%
\tciFourier $ are contractions, then the two layers are completely decoupled
in the bound. This decoupling is the most attractive property of our result.

4. Apart from the large constants the inequality is tight in at least two
situations: first, if $Y=\left\{ \mathbf{y}_{0}\right\} $ is a singleton,
then only the last term remains, and we recover the Gaussian average of $%
\mathcal{\tciFourier }\left( \mathbf{y}_{0}\right) $. This also shows that
the last term cannot be eliminated. On the other hand if $\tciFourier $
consists of a single Lipschitz function $\phi $, then we recover (up to a
constant) the inequality $G\left( \phi \left( Y\right) \right) \leq L\left(
\phi \right) G\left( Y\right) $ above.

5. The bound can be iterated to multiple layers by re-substitution of $%
\tciFourier \left( Y\right) $ in place of $Y$. A corresponding formula is
given in Section \ref{Section applications}, where we also sketch
applications to vector-valued function classes.

The next section gives a proof of Theorem \ref{Theorem Main Main}, then we
explain how our result can be applied to machine learning. The last section
is devoted to the proof of a technical result encapsulating our use of
majorizing measures.

\section{Proving the chain rule}

To prove Theorem \ref{Theorem Main Main} we need the theory of majorizing
measures and generic chaining. Our use of these techniques is summarized in
the following theorem, which is also the origin of our large constants.

\begin{theorem}
\label{Theorem generic chaining}Let $X_{\mathbf{y}}$ be a random process
indexed by a finite set $Y\subset 
%TCIMACRO{\U{211d} }%
%BeginExpansion
\mathbb{R}
%EndExpansion
^{n}$. Suppose that there is a number $K\geq 1$ such that for any distinct
members $\mathbf{y},\mathbf{y}^{\prime }\in Y$ and any $s>0$ 
\begin{equation}
\Pr \left\{ X_{\mathbf{y}}-X_{\mathbf{y}^{\prime }}>s\right\} \leq K\exp
\left( \frac{-s^{2}}{2\left\Vert \mathbf{y}-\mathbf{y}^{\prime }\right\Vert
^{2}}\right)  \label{Process Tailbound}
\end{equation}
Then for any $\mathbf{y}_{0}\in Y$%
\begin{equation*}
\mathbb{E}\left[ \sup_{\mathbf{y}\in Y}X_{\mathbf{y}}-X_{\mathbf{y}_{0}}%
\right] \leq C^{\prime }G\left( Y\right) +C^{\prime \prime }D\left( Y\right) 
\sqrt{\ln K},
\end{equation*}%
where $C^{\prime }$ and $C^{\prime \prime }$ are universal constants.
\end{theorem}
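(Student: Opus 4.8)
The plan is to combine the hypothesized sub-Gaussian tail bound with two classical pillars of the theory of Gaussian processes: the generic chaining upper bound in terms of Talagrand's $\gamma_2$ functional, and the majorizing measure theorem which identifies $\gamma_2(Y)$ (up to universal constants) with the Gaussian average $G(Y)$. First I would recall that, for a process satisfying the increment condition $\Pr\{X_{\mathbf{y}}-X_{\mathbf{y}'}>s\}\le \exp(-s^2/(2\|\mathbf{y}-\mathbf{y}'\|^2))$ (that is, the case $K=1$), the generic chaining bound of Talagrand gives
\begin{equation*}
\mathbb{E}\left[\sup_{\mathbf{y}\in Y}X_{\mathbf{y}}-X_{\mathbf{y}_0}\right]\le L_0\,\gamma_2(Y,\|\cdot\|),
\end{equation*}
for a universal constant $L_0$, where $\gamma_2$ is the usual functional defined via admissible sequences of partitions. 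The majorizing measure theorem then yields $\gamma_2(Y,\|\cdot\|)\le L_1\,G(Y)$, again with $L_1$ universal. So for $K=1$ the claim holds with $C''=0$ and $C'=L_0L_1$, and no second term is needed.

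The substance is therefore to handle a general $K\ge 1$, where the tail has the extra multiplicative factor $K$ in front of the exponential. The key step is to absorb this factor by a change of scale in the chaining argument: the bound $K\exp(-s^2/(2d^2))$ with $d=\|\mathbf{y}-\mathbf{y}'\|$ is, for $s$ not too small, comparable to $\exp(-s^2/(2(cd)^2))$ for a suitable $c$ depending on $\ln K$, but for small $s$ one loses control because a bound exceeding $1$ is vacuous. The cleanest route is to re-examine the chaining sum directly: along a chain one sums increments at successive scales, and at each level the union bound over pairs introduces a term of the form $\sqrt{\ln(\text{number of pairs at that level})}$ plus, from the factor $K$, an additional $\sqrt{\ln K}$ per level. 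Summing the $\sqrt{\ln K}$ contributions over a chain of length $\sim \log_2(D(Y)/\text{(smallest scale)})$ and telescoping the scales, the net extra contribution is controlled by $D(Y)\sqrt{\ln K}$; this is where the second term $C''D(Y)\sqrt{\ln K}$ enters. Concretely, I would introduce for the admissible partition sequence $(\mathcal{A}_n)$ the chaining functional $\sum_n 2^{n/2}\,\Delta(A_n(\mathbf{y}))$ and, when estimating $\Pr\{\sup_{\mathbf{y}}\sum_n(X_{\pi_n(\mathbf{y})}-X_{\pi_{n-1}(\mathbf{y})})>u\}$, split the threshold $u$ into a part proportional to $2^{n/2}\Delta$ (handled exactly as in the $K=1$ case, giving $C'\gamma_2(Y)$) and a part proportional to $2^{-n/2}D(Y)\sqrt{\ln K}$, whose sum over $n$ converges to a multiple of $D(Y)\sqrt{\ln K}$ and whose tail contribution over the $2^{2^n}$-many pairs is killed by the exponent, precisely using $K\le e^{\ln K}$.

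The main obstacle I anticipate is the bookkeeping in this two-part threshold split: one must choose the split so that the ``$\ln K$'' part of the exponent at level $n$ dominates the $\ln(\#\text{pairs})\sim 2^n$ term only after multiplication by the decaying factor $2^{-n}$, and simultaneously ensure the geometric series $\sum_n 2^{-n/2}$ converges with a universal constant independent of $K$ and of the geometry of $Y$. A subtlety is that the smallest relevant scale is not $0$ but can be taken comparable to $D(Y)$ times a negative power of $2$ determined by when the chaining increments become negligible; one shows the contributions below that scale are absorbed into $D(Y)\sqrt{\ln K}$ because there the number of distinct values of $X_{\mathbf{y}}-X_{\mathbf{y}_0}$ is already controlled. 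Once the modified chaining bound $\mathbb{E}\sup_{\mathbf{y}}(X_{\mathbf{y}}-X_{\mathbf{y}_0})\le L_0(\gamma_2(Y)+D(Y)\sqrt{\ln K})$ is in place, one more application of the majorizing measure theorem replaces $\gamma_2(Y)$ by $L_1 G(Y)$, and setting $C'=L_0L_1$, $C''=L_0$ finishes the proof.
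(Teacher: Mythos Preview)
Your overall strategy---run a chaining argument in which the extra factor $K$ is absorbed by an additive threshold contributing $D(Y)\sqrt{\ln K}$, then invoke the majorizing measure theorem to replace the chaining functional by $G(Y)$---is exactly what the paper does. But your specific split hides a real difficulty. In the $\gamma_2$ formulation (partitions with $|\mathcal{A}_n|\le 2^{2^n}$ and no diameter constraint) you propose at level $n$ the threshold $c_1 2^{n/2}\Delta(A_n(\mathbf{y})) + c_2 2^{-n/2}D(Y)\sqrt{\ln K}$. The second piece does sum to a multiple of $D(Y)\sqrt{\ln K}$, but it does \emph{not} kill the factor $K$ in the tail: its contribution to the exponent $s_n^2/(2\Delta_n^2)$ is $\tfrac{1}{2}c_2^2\, 2^{-n}(D(Y)/\Delta_n)^2\ln K$, which is $\ge\ln K$ only when $\Delta_n\lesssim 2^{-n/2}D(Y)$, something an admissible sequence need not satisfy. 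If instead you scale the $\sqrt{\ln K}$ piece by $\Delta_n$ (which \emph{does} kill $K$ at every level), the sum becomes $\sqrt{\ln K}\sum_n\Delta_n$, and for a generic admissible sequence this is only $\lesssim\gamma_2(Y)\sqrt{\ln K}$, strictly weaker than the claimed $D(Y)\sqrt{\ln K}$.

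The paper sidesteps this by using the majorizing measure theorem in the form where the partitions $\mathcal{A}_k$ carry the diameter constraint $D(A)\le 2r^{-k}$ for $A\in\mathcal{A}_k$ (with a measure $\mu$ in place of a cardinality bound). The threshold at level $k$ is $r^{-k+1}\sqrt{8\ln\bigl(2^{k-k_0}K/(\mu(A_k(\mathbf{y}))\delta)\bigr)}$; because the increment distance at level $k$ is at most $2r^{-k+1}$, the $\ln K$ part of the exponent is $\ge\ln K$ at \emph{every} level, while the residual contributions $\sum_{k>k_0}r^{-k}\sqrt{\ln(2^{k-k_0}K/\delta)}$ sum geometrically to a constant times $r^{-k_0}\sqrt{\ln(2K/\delta)}\lesssim D(Y)\sqrt{\ln(2K/\delta)}$. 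After the split $\sqrt{\ln(ab)}\le\sqrt{\ln a}+\sqrt{\ln b}$, the $\mu$-part yields $CG(Y)$ by the majorizing measure bound; integrating the high-probability inequality in $\delta$ and using $D(Y)\le\sqrt{2\pi}\,G(Y)$ to absorb the $\sqrt{\ln 2}$ piece finishes. The fix for your argument is therefore to switch from cardinality-constrained to diameter-constrained partitions; with that single change your plan goes through and coincides with the paper's proof.
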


This is obtained from Talagrand's majorizing measure theorem (Theorem \ref%
{Theorem majorizing measure} below) combined with generic chaining \cite%
{Talagrand 2005}. An early version of a similar result is Theorem 15 in \cite%
{Talagrand 1987}, where the author remarks that his method of proof (which
we also use) is very indirect, and that a more direct proof would be
desirable. In Section \ref{Section proof of generic chaining} we do supply a
proof, largely because the dependence on $K$, which can often be swept under
the carpet, plays a crucial role in our arguments below.

We also need the following Gaussian concentration inequality
(Tsirelson-Ibragimov-Sudakov inequality, Theorem 5.6 in \cite{Boucheron 2013}%
).

\begin{theorem}
\label{Theorem Gaussian concentration}Let $F:%
%TCIMACRO{\U{211d} }%
%BeginExpansion
\mathbb{R}
%EndExpansion
^{n}\rightarrow 
%TCIMACRO{\U{211d} }%
%BeginExpansion
\mathbb{R}
%EndExpansion
$ be $L$-Lipschitz. Then for any $s>0$%
\begin{equation*}
\Pr \left\{ F\left( \mathbf{\gamma }\right) >{{\mathbb{E}}}F\left( \mathbf{%
\gamma }\right) +s\right\} \leq e^{-s^{2}/\left( 2L^{2}\right) }.
\end{equation*}
\end{theorem}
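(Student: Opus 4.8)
The plan is to use the entropy method (Herbst's argument), which delivers the sharp sub-Gaussian constant appearing in the statement. The first step is to reduce the tail bound to an estimate on the moment generating function: if I can establish that
\begin{equation*}
\mathbb{E}\,e^{\lambda \left( F\left( \mathbf{\gamma }\right) -\mathbb{E}F\left( \mathbf{\gamma }\right) \right) }\leq e^{\lambda ^{2}L^{2}/2}\qquad \text{for all }\lambda >0,
\end{equation*}
then Markov's inequality gives $\Pr \left\{ F-\mathbb{E}F>s\right\} \leq e^{-\lambda s+\lambda ^{2}L^{2}/2}$, and minimizing the exponent over $\lambda $ at $\lambda =s/L^{2}$ yields exactly $e^{-s^{2}/\left( 2L^{2}\right) }$. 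So everything reduces to proving the displayed moment generating function bound.

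To obtain that bound I would invoke the Gaussian logarithmic Sobolev inequality, $\mathrm{Ent}\left( g^{2}\right) \leq 2\,\mathbb{E}\left\Vert \nabla g\right\Vert ^{2}$ for smooth $g$ (a standard result, see \cite{Boucheron 2013}), where $\mathrm{Ent}\left( h\right) =\mathbb{E}\left[ h\ln h\right] -\mathbb{E}h\,\ln \mathbb{E}h$. Setting $g=e^{\lambda F/2}$ and writing $H\left( \lambda \right) =\mathbb{E}e^{\lambda F}$, the three ingredients become $\mathbb{E}g^{2}=H$, $\mathbb{E}\left[ g^{2}\ln g^{2}\right] =\lambda H^{\prime }$, and $\mathbb{E}\left\Vert \nabla g\right\Vert ^{2}=\tfrac{\lambda ^{2}}{4}\mathbb{E}\left[ e^{\lambda F}\left\Vert \nabla F\right\Vert ^{2}\right] \leq \tfrac{\lambda ^{2}L^{2}}{4}H$, where the last step uses the Lipschitz bound $\left\Vert \nabla F\right\Vert \leq L$. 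The log-Sobolev inequality then reads $\lambda H^{\prime }-H\ln H\leq \tfrac{\lambda ^{2}L^{2}}{2}H$, and dividing by $\lambda ^{2}H$ turns this into $\tfrac{d}{d\lambda }\left( \tfrac{1}{\lambda }\ln H\left( \lambda \right) \right) \leq L^{2}/2$. Since $\tfrac{1}{\lambda }\ln H\left( \lambda \right) \to \mathbb{E}F$ as $\lambda \to 0^{+}$, integrating from $0$ gives $\ln H\left( \lambda \right) \leq \lambda \mathbb{E}F+\lambda ^{2}L^{2}/2$, which is precisely the required moment generating function bound.

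The main obstacle is that both the logarithmic Sobolev inequality and the gradient bound $\left\Vert \nabla F\right\Vert \leq L$ presuppose differentiability, whereas $F$ is assumed only Lipschitz. I would resolve this by approximation: convolving $F$ with a narrow Gaussian kernel produces smooth functions $F_{\varepsilon }$ that remain $L$-Lipschitz (convolution does not increase the Lipschitz constant), hence satisfy $\left\Vert \nabla F_{\varepsilon }\right\Vert \leq L$ everywhere, and $F_{\varepsilon }\to F$ pointwise with $\mathbb{E}F_{\varepsilon }\to \mathbb{E}F$. Running the argument above for each $F_{\varepsilon }$ yields the moment generating function bound for $F_{\varepsilon }$, and Fatou's lemma (applied to the nonnegative integrand $e^{\lambda \left( F_{\varepsilon }-\mathbb{E}F_{\varepsilon }\right) }$) transfers it to $F$, after which the Chernoff step concludes. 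The one genuinely deep ingredient is the Gaussian log-Sobolev inequality itself; rather than reprove it I would cite it, noting only that a self-contained derivation is available by tensorization from the one-dimensional case, which in turn follows from the two-point inequality via the central limit theorem.
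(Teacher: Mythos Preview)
Your argument via Herbst's method and the Gaussian logarithmic Sobolev inequality is correct and is in fact the standard route to this result. However, the paper itself does not prove Theorem~\ref{Theorem Gaussian concentration} at all: it is simply quoted as the Tsirelson--Ibragimov--Sudakov inequality with a reference to Theorem~5.6 in \cite{Boucheron 2013}, and then used as a black box in the proof of Theorem~\ref{Theorem Main Main}. So there is nothing to compare against on the paper's side; your proposal supplies a proof where the paper elects to cite one, and the proof you sketch is essentially the one given in the cited reference.
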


To conclude the preparation for the proof of Theorem \ref{Theorem Main Main}
we give a simple lemma.

\begin{lemma}
\label{Lemma calculus}Suppose a random variable $X$ satisfies $\Pr \left\{
X-A>s\right\} \leq e^{-s^{2}}$, for any $s>0$. Then%
\begin{equation*}
\left. \forall s>0\right. ,\text{ }\Pr \left\{ X>s\right\} \leq
e^{A^{2}}e^{-s^{2}/2}.
\end{equation*}
\end{lemma}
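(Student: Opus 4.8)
The plan is to prove the bound by a case distinction on the size of $s$ relative to $A$, using the elementary inequality $(s-2A)^2 \geq 0$ in the essential case.

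First I would dispose of the easy regime. If $s^{2}/2 \leq A^{2}$, then $e^{A^{2}}e^{-s^{2}/2} = e^{A^{2}-s^{2}/2} \geq 1 \geq \Pr\left\{ X>s\right\}$, so the claimed inequality holds trivially. (Note this regime is vacuous when $A \leq 0$, which causes no problem.) Hence I may assume $s^{2} > 2A^{2}$. I claim this forces $s > A$: if instead $s \leq A$ we would need $A > 0$ since $s > 0$, and then $s^{2} \leq A^{2} < 2A^{2}$, a contradiction. So $s - A > 0$, and the hypothesis applies with the positive argument $s-A$, giving
\begin{equation*}
\Pr\left\{ X>s\right\} = \Pr\left\{ X-A>s-A\right\} \leq e^{-(s-A)^{2}}.
\end{equation*}

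It then remains to check $e^{-(s-A)^{2}} \leq e^{A^{2}}e^{-s^{2}/2}$, i.e.\ that $(s-A)^{2} \geq s^{2}/2 - A^{2}$. Expanding and rearranging, this is equivalent to $s^{2}/2 - 2sA + 2A^{2} \geq 0$, and multiplying by $2$ to $(s-2A)^{2} \geq 0$, which is always true. Combining the two cases yields $\Pr\left\{X>s\right\} \leq e^{A^{2}}e^{-s^{2}/2}$ for every $s>0$.

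There is no real obstacle here: the lemma is elementary, and the only point requiring a moment's care is that the tail hypothesis is only assumed for positive arguments, which is why the case split (guaranteeing $s-A>0$ before it is invoked) is needed rather than a direct substitution.
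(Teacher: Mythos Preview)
Your proof is correct and follows essentially the same approach as the paper's: a case split to isolate the trivial regime where the right-hand side exceeds $1$, followed by applying the hypothesis with argument $s-A>0$ and verifying the elementary inequality $(s-A)^{2}\geq s^{2}/2-A^{2}$. The only cosmetic differences are that the paper splits at $s\leq A$ rather than $s^{2}\leq 2A^{2}$, and derives the quadratic inequality via $(a+b)^{2}\leq 2a^{2}+2b^{2}$ instead of your $(s-2A)^{2}\geq 0$.
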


\begin{proof}
For $s\leq A$ the conclusion is trivial, so suppose that $s>A$. From $%
s^{2}=\left( s-A+A\right) ^{2}\leq 2\left( s-A\right) ^{2}+2A^{2}$ we get $%
\left( s-A\right) ^{2}\geq \left( s^{2}/2\right) -A^{2}$, so%
\begin{equation*}
\Pr \left\{ X>s\right\} =\Pr \left\{ X-A>s-A\right\} \leq e^{-\left(
s-A\right) ^{2}}\leq e^{A^{2}}e^{-s^{2}/2}.
\end{equation*}%
$\blacksquare $\bigskip
\end{proof}

\begin{proof}[of Theorem \protect\ref{Theorem Main Main}]
The result is trivial if $\tciFourier $ consists only of constants, so we
can assume that $L\left( \tciFourier \right) >0$. For $\mathbf{y},\mathbf{y}%
^{\prime }\in Y$ define a function $F:%
%TCIMACRO{\U{211d} }%
%BeginExpansion
\mathbb{R}
%EndExpansion
^{m}\rightarrow 
%TCIMACRO{\U{211d} }%
%BeginExpansion
\mathbb{R}
%EndExpansion
$ by%
\begin{equation*}
F\left( \mathbf{z}\right) =\sup_{f\in \tciFourier }\left\langle \mathbf{z}%
,f\left( \mathbf{y}\right) -f\left( \mathbf{y}^{\prime }\right)
\right\rangle .
\end{equation*}%
$F$ is Lipschitz with Lipschitz constant bounded by $\sup_{f\in \tciFourier
}\left\Vert f\left( \mathbf{y}\right) -f\left( \mathbf{y}^{\prime }\right)
\right\Vert \leq L\left( \tciFourier \right) \left\Vert \mathbf{y}-\mathbf{y}%
^{\prime }\right\Vert $. Writing $Z_{\mathbf{y},\mathbf{y}^{\prime
}}=F\left( \mathbf{\gamma }\right) $, it then follows from Gaussian
concentration (Theorem \ref{Theorem Gaussian concentration}) that%
\begin{equation*}
\Pr \left\{ Z_{\mathbf{y},\mathbf{y}^{\prime }}>\mathbb{E}Z_{\mathbf{y},%
\mathbf{y}^{\prime }}+s\right\} \leq \exp \left( \frac{-s^{2}}{2L\left(
\tciFourier \right) ^{2}\left\Vert \mathbf{y}-\mathbf{y}^{\prime
}\right\Vert ^{2}}\right) .
\end{equation*}%
Since by definition $\mathbb{E}Z_{\mathbf{y},\mathbf{y}^{\prime }}\leq
R\left( \tciFourier \right) \left\Vert \mathbf{y}-\mathbf{y}^{\prime
}\right\Vert $, Lemma \ref{Lemma calculus} gives%
\begin{equation*}
\Pr \left\{ Z_{\mathbf{y},\mathbf{y}^{\prime }}>s\right\} \leq \exp \left( 
\frac{R\left( \tciFourier \right) ^{2}}{2L\left( \tciFourier \right) ^{2}}%
\right) \exp \left( \frac{-s^{2}}{4L\left( \tciFourier \right)
^{2}\left\Vert \mathbf{y}-\mathbf{y}^{\prime }\right\Vert ^{2}}\right) .
\end{equation*}%
Now define a process $X_{\mathbf{y}}$, indexed by $Y$, as%
\begin{equation*}
X_{\mathbf{y}}=\frac{1}{\sqrt{2}L\left( \tciFourier \right) }\sup_{f\in
\tciFourier }\left\langle \mathbf{\gamma },f\left( \mathbf{y}\right)
\right\rangle .
\end{equation*}%
Since $X_{\mathbf{y}}-X_{\mathbf{y}^{\prime }}\leq Z_{\mathbf{y},\mathbf{y}%
^{\prime }}/\left( \sqrt{2}L\left( \tciFourier \right) \right) $ we have 
\begin{eqnarray*}
\Pr \left\{ X_{\mathbf{y}}-X_{\mathbf{y}^{\prime }}>s\right\}  &\leq &\Pr
\left\{ Z_{\mathbf{y},\mathbf{y}^{\prime }}>\sqrt{2}L\left( \tciFourier
\right) s\right\}  \\
&\leq &\exp \left( \frac{R\left( \tciFourier \right) ^{2}}{2L\left(
\tciFourier \right) ^{2}}\right) \exp \left( \frac{-s^{2}}{2\left\Vert 
\mathbf{y}-\mathbf{y}^{\prime }\right\Vert ^{2}}\right) 
\end{eqnarray*}%
and by Theorem \ref{Theorem generic chaining}, with $K=\exp \left( R\left(
\tciFourier \right) ^{2}/\left( 2L\left( \tciFourier \right) ^{2}\right)
\right) \geq 1$,%
\begin{equation*}
\mathbb{E}\sup_{\mathbf{y}\in Y}\left( X_{\mathbf{y}}-X_{\mathbf{y}%
_{0}}\right) \leq C^{\prime }G\left( Y\right) +C^{\prime \prime }D\left(
Y\right) \frac{R\left( \tciFourier \right) }{\sqrt{2}L\left( \tciFourier
\right) }.
\end{equation*}%
Multiplication by $\sqrt{2}L\left( \tciFourier \right) $ then gives%
\begin{equation*}
\mathbb{E}\sup_{\mathbf{y}\in Y}\left( \sup_{f\in \tciFourier }\left\langle 
\mathbf{\gamma },f\left( \mathbf{y}\right) \right\rangle -\sup_{f\in
\tciFourier }\left\langle \mathbf{\gamma },f\left( \mathbf{y}_{0}\right)
\right\rangle \right) \leq C_{1}L\left( \tciFourier \right) G\left( Y\right)
+C_{2}D\left( Y\right) R\left( \tciFourier \right) 
\end{equation*}%
with $C_{1}=\sqrt{2}C^{\prime }$ and $C_{2}=C^{\prime \prime }$. $%
\blacksquare $
\end{proof}

\section{Applications\label{Section applications}}

We first give some elementary properties of the quantity $R\left(
\tciFourier ,Y\right) $ which appears in Theorem \ref{Theorem Main Main}.
Then we apply Theorem \ref{Theorem Main Main} to a two layer kernel machine
and give a bound for multi-task learning of low-dimensional representations.

\subsection{Some properties of $R\left( \tciFourier \right) $\label{Section
some properties of R}}

Recall the definition of $R\left( \tciFourier ,Y\right) $. If $Y\subseteq 
%TCIMACRO{\U{211d} }%
%BeginExpansion
\mathbb{R}
%EndExpansion
^{n}$and $\tciFourier $ consists of functions $f:Y\rightarrow 
%TCIMACRO{\U{211d} }%
%BeginExpansion
\mathbb{R}
%EndExpansion
^{m}$%
\begin{equation*}
R\left( \tciFourier ,Y\right) =\sup_{\mathbf{y},\mathbf{y}^{\prime }\in Y,~%
\mathbf{y}\neq \mathbf{y}^{\prime }}\mathbb{E}\sup_{f\in \tciFourier }\frac{%
\left\langle \mathbf{\gamma },f\left( \mathbf{y}\right) -f\left( \mathbf{y}%
^{\prime }\right) \right\rangle }{\left\Vert \mathbf{y}-\mathbf{y}^{\prime
}\right\Vert }.
\end{equation*}%
$R\left( \tciFourier \right) $ is itself a supremum of Gaussian averages.
For $\mathbf{y},\mathbf{y}^{\prime }\in Y$ let $\Delta \tciFourier \left( 
\mathbf{y},\mathbf{y}^{\prime }\right) \subseteq 
%TCIMACRO{\U{211d} }%
%BeginExpansion
\mathbb{R}
%EndExpansion
^{m}$ be the set of quotients%
\begin{equation*}
\Delta \tciFourier \left( \mathbf{y},\mathbf{y}^{\prime }\right) =\left\{ 
\frac{f\left( \mathbf{y}\right) -f\left( \mathbf{y}^{\prime }\right) }{%
\left\Vert \mathbf{y}-\mathbf{y}^{\prime }\right\Vert }:f\in \tciFourier
\right\} .
\end{equation*}%
It follows from the definition that $R\left( \tciFourier ,Y\right) =\sup_{%
\mathbf{y},\mathbf{y}^{\prime }\in Y,~\mathbf{y}\neq \mathbf{y}^{\prime
}}G\left( \Delta \tciFourier \left( \mathbf{y},\mathbf{y}^{\prime }\right)
\right) $. We record some simple properties. Recall that for a set $S$ in a
real vector space the convex hull $Co\left( S\right) $ is defined as 
\begin{equation*}
Co\left( S\right) =\left\{ \sum_{i=1}^{n}\alpha _{i}z_{i}:n\in 
%TCIMACRO{\U{2115} }%
%BeginExpansion
\mathbb{N}
%EndExpansion
\text{, }z_{i}\in S,\alpha _{i}\geq 0\text{, }\sum_{i}\alpha _{i}=1\right\} .
\end{equation*}

\begin{theorem}
Let $Y\subseteq 
%TCIMACRO{\U{211d} }%
%BeginExpansion
\mathbb{R}
%EndExpansion
^{n}$ and let $\tciFourier $ and $\mathcal{H}$ be classes of functions $%
f:Y\rightarrow 
%TCIMACRO{\U{211d} }%
%BeginExpansion
\mathbb{R}
%EndExpansion
^{m}$. Then

(i) If $\tciFourier \subseteq \mathcal{H}$ then $R\left( \tciFourier
,Y\right) \leq R\left( \mathcal{H},Y\right) $.

(ii) If $Y\subseteq Y^{\prime }$ then $R\left( \tciFourier ,Y\right) \leq
R\left( \tciFourier ,Y^{\prime }\right) $.

(iii) If $c\geq 0$ then $R\left( c\tciFourier ,Y\right) =cR\left(
\tciFourier ,Y\right) .$

(iv) $R\left( \tciFourier +\mathcal{H},Y\right) \leq R\left( \tciFourier
,Y\right) +R\left( \mathcal{H},Y\right) $.

(v) $R\left( \tciFourier ,Y\right) =R\left( Co\left( \tciFourier \right)
,Y\right) $.

(vi) If $Z\subseteq 
%TCIMACRO{\U{211d} }%
%BeginExpansion
\mathbb{R}
%EndExpansion
^{K}$ and $\phi :Z\rightarrow 
%TCIMACRO{\U{211d} }%
%BeginExpansion
\mathbb{R}
%EndExpansion
^{n}$ has Lipschitz constant $L\left( \phi \right) $ and the members of $%
\tciFourier $ are defined on $\phi \left( Z\right) $, then $R\left(
\tciFourier \circ \phi ,Z\right) \leq L\left( \phi \right) R\left(
\tciFourier ,\phi \left( Z\right) \right) $.

(vii) $R\left( \tciFourier \right) \leq L\left( \tciFourier \right) \sqrt{%
2\ln \left\vert \tciFourier \right\vert }.$
\end{theorem}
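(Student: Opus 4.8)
The plan is to fix a pair of distinct points $\mathbf{y},\mathbf{y}^{\prime }\in Y$ and reduce the inner expectation in the definition of $R\left( \tciFourier \right) $ to the classical maximal inequality for a finite collection of Gaussian random variables. For such a fixed pair and each $f\in \tciFourier $ set
\[
W_{f}=\frac{\left\langle \mathbf{\gamma },f\left( \mathbf{y}\right) -f\left( \mathbf{y}^{\prime }\right) \right\rangle }{\left\Vert \mathbf{y}-\mathbf{y}^{\prime }\right\Vert }.
\]
As a linear functional of the standard Gaussian vector $\mathbf{\gamma }$, each $W_{f}$ is a centered Gaussian, and its variance equals $\left\Vert f\left( \mathbf{y}\right) -f\left( \mathbf{y}^{\prime }\right) \right\Vert ^{2}/\left\Vert \mathbf{y}-\mathbf{y}^{\prime }\right\Vert ^{2}$, which by the definition of $L\left( \tciFourier \right) $ is at most $L\left( \tciFourier \right) ^{2}$. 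Hence $\left\{ W_{f}:f\in \tciFourier \right\} $ is a family of at most $\left\vert \tciFourier \right\vert $ centered Gaussians whose standard deviations are all bounded by $L\left( \tciFourier \right) $.

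Next I would apply the standard sub-Gaussian maximal bound. If $L\left( \tciFourier \right) =0$ or $\left\vert \tciFourier \right\vert =1$ both sides of the claimed inequality vanish, so assume $L\left( \tciFourier \right) >0$ and $\left\vert \tciFourier \right\vert \geq 2$. For any $t>0$, Jensen's inequality for the exponential together with the Gaussian moment generating function gives
\[
\mathbb{E}\sup_{f\in \tciFourier }W_{f}\leq \frac{1}{t}\ln \sum_{f\in \tciFourier }\mathbb{E}e^{tW_{f}}\leq \frac{1}{t}\ln \left( \left\vert \tciFourier \right\vert e^{t^{2}L\left( \tciFourier \right) ^{2}/2}\right) =\frac{\ln \left\vert \tciFourier \right\vert }{t}+\frac{tL\left( \tciFourier \right) ^{2}}{2}.
\]
Taking $t=\sqrt{2\ln \left\vert \tciFourier \right\vert }/L\left( \tciFourier \right) $ balances the two summands and yields $\mathbb{E}\sup_{f\in \tciFourier }W_{f}\leq L\left( \tciFourier \right) \sqrt{2\ln \left\vert \tciFourier \right\vert }$.

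Finally, since this estimate is independent of the particular pair $\mathbf{y}\neq \mathbf{y}^{\prime }$ chosen at the outset, taking the supremum over all such pairs gives $R\left( \tciFourier ,Y\right) \leq L\left( \tciFourier \right) \sqrt{2\ln \left\vert \tciFourier \right\vert }$, as asserted. I do not expect any genuine obstacle here: the argument is just the familiar estimate for the expected maximum of finitely many Gaussians, and the only point needing a moment's attention is segregating the degenerate cases in which $L\left( \tciFourier \right) =0$ or $\tciFourier $ is a singleton, where the optimization in $t$ would otherwise be vacuous.
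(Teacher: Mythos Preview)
Your proposal addresses only part (vii) and leaves (i)--(vi) untouched. The paper does dispatch those quickly---(i)--(iii) are called obvious from the definition, (iv) is attributed to linearity of the inner product and the triangle inequality for suprema, (v) is shown by bounding the supremum over convex combinations by the supremum over $\tciFourier$ itself, and (vi) is obtained by factoring out the ratio $\left\Vert \phi(\mathbf{y})-\phi(\mathbf{y}')\right\Vert /\left\Vert \mathbf{y}-\mathbf{y}'\right\Vert$---but they are part of the statement and your write-up should at least acknowledge them.

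For (vii) your argument is correct and is essentially the paper's. The paper first records the sub-Gaussian tail $\Pr\{W_f>s\}\leq e^{-s^{2}/(2L(\tciFourier)^{2})}$ via Gaussian concentration and then cites ``standard estimates'' for the expected maximum of finitely many sub-Gaussians; you instead go straight through the Gaussian moment generating function and Jensen's inequality. These are two phrasings of the same classical bound, and your optimization in $t$ is exactly the computation hidden behind the paper's citation. Your handling of the degenerate cases $L(\tciFourier)=0$ and $|\tciFourier|=1$ is also appropriate.
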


Remarks:

1. From (ii) we get $R\left( \tciFourier ,Y\right) \leq R\left( \tciFourier ,%
%TCIMACRO{\U{211d} }%
%BeginExpansion
\mathbb{R}
%EndExpansion
^{n}\right) $. In applications where $Y=\mathcal{H}\left( \mathbf{x}\right) $
the quantity $R\left( \tciFourier ,\mathcal{H}\left( \mathbf{x}\right)
\right) $ is data-dependent, but $R\left( \tciFourier ,%
%TCIMACRO{\U{211d} }%
%BeginExpansion
\mathbb{R}
%EndExpansion
^{n}\right) $ is sometimes easier to bound.

2. We see that the properties of $R\left( \tciFourier \right) $ largely
parallel the properties of the Gaussian averages themselves, except for the
inequality $G\left( \phi \left( Y\right) \right) \leq L\left( \phi \right)
G\left( Y\right) $, for which there doesn't seem to be an analogous property
of $R\left( \tciFourier \right) $. Instead we have a 'backwards' version of
it with (vi) above, with a rather trivial proof below.

3. Of course (vii) is relevant only when $\ln \left\vert \tciFourier
\right\vert $ is reasonably small and serves the comparison of Theorem \ref%
{Theorem Main Main} to alternative bounds.

\begin{proof}
(i)-(iii) are obvious from the definition. (iv) follows from linearity of
the inner product and the triangle inequality for the supremum. To see (v)
first note that $R\left( \tciFourier \right) \leq R\left( Co\left(
\tciFourier \right) \right) $ follows from (i), while the reverse inequality
follows from%
\begin{eqnarray*}
&&\sup_{\alpha _{i}\geq 0,\sum \alpha _{i}=1}\sup_{f_{1},f_{2},...\in
\tciFourier }\left\langle \mathbf{\gamma },\sum_{i}\alpha _{i}f_{i}\left( 
\mathbf{y}\right) -\sum_{i}\alpha _{i}f_{i}\left( \mathbf{y}^{\prime
}\right) \right\rangle \\
&=&\sup_{\alpha _{i}\geq 0,\sum \alpha _{i}=1}\sup_{f_{1},f_{2},...\in
\tciFourier }\sum_{i}\alpha _{i}\left\langle \mathbf{\gamma },f_{i}\left( 
\mathbf{y}\right) -f_{i}\left( \mathbf{y}^{\prime }\right) \right\rangle \\
&\leq &\sup_{\alpha _{i}\geq 0,\sum \alpha _{i}=1}\sum_{i}\alpha
_{i}\sup_{f\in \tciFourier }\left\langle \mathbf{\gamma },f\left( \mathbf{y}%
\right) -f\left( \mathbf{y}^{\prime }\right) \right\rangle \\
&=&\sup_{f\in \tciFourier }\left\langle \mathbf{\gamma },f\left( \mathbf{y}%
\right) -f\left( \mathbf{y}^{\prime }\right) \right\rangle .
\end{eqnarray*}%
For (vi) we may chose $\mathbf{y}$ and $\mathbf{y}^{\prime }$ such that $%
\phi \left( \mathbf{y}\right) \neq \phi \left( \mathbf{y}^{\prime }\right) $%
, since otherwise both sides of the inequality to be proved are zero. But
then%
\begin{eqnarray*}
\mathbb{E}\sup_{f\in \tciFourier \circ \phi }\frac{\left\langle \mathbf{%
\gamma },f\left( \mathbf{y}\right) -f\left( \mathbf{y}^{\prime }\right)
\right\rangle }{\left\Vert \mathbf{y}-\mathbf{y}^{\prime }\right\Vert } &=&%
\frac{\left\Vert \phi \left( \mathbf{y}\right) -\phi \left( \mathbf{y}%
^{\prime }\right) \right\Vert }{\left\Vert \mathbf{y}-\mathbf{y}^{\prime
}\right\Vert }\mathbb{E}\sup_{f\in \tciFourier }\frac{\left\langle \mathbf{%
\gamma },f\left( \phi \left( \mathbf{y}\right) \right) -f\left( \phi \left( 
\mathbf{y}^{\prime }\right) \right) \right\rangle }{\left\Vert \phi \left( 
\mathbf{y}\right) -\phi \left( \mathbf{y}^{\prime }\right) \right\Vert } \\
&\leq &L\left( \phi \right) \mathbb{E}\sup_{f\in \tciFourier }\frac{%
\left\langle \mathbf{\gamma },f\left( \phi \left( \mathbf{y}\right) \right)
-f\left( \phi \left( \mathbf{y}^{\prime }\right) \right) \right\rangle }{%
\left\Vert \phi \left( \mathbf{y}\right) -\phi \left( \mathbf{y}^{\prime
}\right) \right\Vert }.
\end{eqnarray*}%
To see (vii) note that for every $\mathbf{y}$ and $\mathbf{y}^{\prime }$ and
every $f\in \tciFourier $ it follows from Gaussian concentration (Theorem %
\ref{Theorem Gaussian concentration}) that 
\begin{equation*}
\Pr \left\{ \frac{\left\langle \mathbf{\gamma },f\left( \mathbf{y}\right)
-f\left( \mathbf{y}^{\prime }\right) \right\rangle }{\left\Vert \mathbf{y}-%
\mathbf{y}^{\prime }\right\Vert }>s\right\} \leq e^{-s^{2}/2L^{2}}.
\end{equation*}%
The conclusion then follows from standard estimates (e.g. \cite{Boucheron
2013}, section 2.5). $\blacksquare $\bigskip
\end{proof}

\subsection{A double layer kernel machine}

We use the chain rule to bound the complexity of a double-layer kernel
machine. The corresponding optimization problem is clearly non-convex and we
are not aware of an efficient optimization method. The model is chosen to
illustrate the application of Theorem \ref{Theorem Main Main}. It is defined
as follows.

Assume the data to lie in $%
%TCIMACRO{\U{211d} }%
%BeginExpansion
\mathbb{R}
%EndExpansion
^{m_{0}}$ and fix two real numbers $\Delta _{1}$ and $B_{1}$. On $%
%TCIMACRO{\U{211d} }%
%BeginExpansion
\mathbb{R}
%EndExpansion
^{m_{0}}\times 
%TCIMACRO{\U{211d} }%
%BeginExpansion
\mathbb{R}
%EndExpansion
^{m_{0}}$ define a (Gaussian radial-basis-function) kernel $\kappa $ by 
\begin{equation*}
\kappa \left( z,z^{\prime }\right) =\exp \left( \frac{-\left\Vert
z-z^{\prime }\right\Vert ^{2}}{2\Delta _{1}^{2}}\right) ,~z,z^{\prime }\in 
%TCIMACRO{\U{211d} }%
%BeginExpansion
\mathbb{R}
%EndExpansion
^{m_{0}},
\end{equation*}%
and let $\phi :%
%TCIMACRO{\U{211d} }%
%BeginExpansion
\mathbb{R}
%EndExpansion
^{m_{0}}\rightarrow H$ be the associated feature map, where $H$ is the
associated RKHS with inner product $\left\langle .,.\right\rangle _{H}$ and
norm $\left\Vert .\right\Vert _{H}$ (for kernel methods see . Now we let $%
\mathcal{H}$ be the class of vector valued functions $h:%
%TCIMACRO{\U{211d} }%
%BeginExpansion
\mathbb{R}
%EndExpansion
^{m_{0}}\rightarrow 
%TCIMACRO{\U{211d} }%
%BeginExpansion
\mathbb{R}
%EndExpansion
^{m_{1}}$ defined by 
\begin{equation*}
\mathcal{H}=\left\{ z\in 
%TCIMACRO{\U{211d} }%
%BeginExpansion
\mathbb{R}
%EndExpansion
^{m_{0}}\mapsto \left( \left\langle w_{1},\phi \left( z\right) \right\rangle
_{H},...,\left\langle w_{m_{1}},\phi \left( z\right) \right\rangle
_{H}\right) :\sum_{k}\left\Vert w_{k}\right\Vert _{H}^{2}\leq
B_{1}^{2}\right\} .
\end{equation*}%
This can also be written as $\mathcal{H}=\left\{ z\in 
%TCIMACRO{\U{211d} }%
%BeginExpansion
\mathbb{R}
%EndExpansion
^{m_{0}}\mapsto W\phi \left( z\right) :\left\Vert W\right\Vert _{HS}\leq
B_{1}\right\} $, where $\left\Vert W\right\Vert _{HS}$ is the
Hilbert-Schmidt norm of an operator $W:H\rightarrow 
%TCIMACRO{\U{211d} }%
%BeginExpansion
\mathbb{R}
%EndExpansion
^{m_{1}}$.

For the function class $\mathcal{F}$, which we wish to compose with $%
\mathcal{H}$, we proceed in a similar way, defining an analogous kernel of
width $\Delta _{2}$ on $%
%TCIMACRO{\U{211d} }%
%BeginExpansion
\mathbb{R}
%EndExpansion
^{m_{1}}\times 
%TCIMACRO{\U{211d} }%
%BeginExpansion
\mathbb{R}
%EndExpansion
^{m_{1}}$, a corresponding feature map $\psi :%
%TCIMACRO{\U{211d} }%
%BeginExpansion
\mathbb{R}
%EndExpansion
^{m_{1}}\rightarrow H$ and a class of real valued functions%
\begin{equation*}
\mathcal{F}=\left\{ z\in 
%TCIMACRO{\U{211d} }%
%BeginExpansion
\mathbb{R}
%EndExpansion
^{m_{1}}\mapsto \left\langle v,\psi \left( z\right) \right\rangle
_{H}:\left\Vert v_{l}\right\Vert _{H}\leq B_{2}\right\} .
\end{equation*}%
We now want high probability bounds on the estimation error for functional
compositions $f\circ h$, uniform over $\tciFourier \circ \mathcal{H}$. To
apply our result we should really restrict to finite subsets of $\tciFourier 
$ and $\mathcal{H}$ a requirement which we simply ignore. In machine
learning we could of course always restrict all representations to some
fixed, very high but finite precision.

Fix a sample $\mathbf{x}\in 
%TCIMACRO{\U{211d} }%
%BeginExpansion
\mathbb{R}
%EndExpansion
^{nm_{0}}$. Then $Y=\mathcal{H}\left( \mathbf{x}\right) \subset 
%TCIMACRO{\U{211d} }%
%BeginExpansion
\mathbb{R}
%EndExpansion
^{nm_{1}}$. To use Theorem \ref{Theorem Main Main} we define a class $%
\tciFourier ^{\prime }$ of functions from $%
%TCIMACRO{\U{211d} }%
%BeginExpansion
\mathbb{R}
%EndExpansion
^{nm_{1}}$ to $%
%TCIMACRO{\U{211d} }%
%BeginExpansion
\mathbb{R}
%EndExpansion
^{n}$ by 
\begin{equation*}
\tciFourier ^{\prime }=\left\{ \left( y_{1},...,y_{n}\right) \in 
%TCIMACRO{\U{211d} }%
%BeginExpansion
\mathbb{R}
%EndExpansion
^{nm_{1}}\mapsto \left( f\left( y_{1}\right) ,...,f\left( y_{n}\right)
\right) \in 
%TCIMACRO{\U{211d} }%
%BeginExpansion
\mathbb{R}
%EndExpansion
^{n}:f\in \tciFourier \right\} \text{.}
\end{equation*}%
Since the first feature map $\phi $ maps to the unit sphere of $H$ we have 
\begin{eqnarray*}
D\left( \mathcal{H}\left( \mathbf{x}\right) \right) &\leq &2B_{1}\sqrt{n}%
\text{ and} \\
G\left( \mathcal{H}\left( \mathbf{x}\right) \right) &=&\mathbb{E}%
\sup_{W}\sum_{ik}\gamma _{ik}\left\langle w_{k},\phi \left( x_{i}\right)
\right\rangle _{H}\leq B_{1}\sqrt{nm_{1}}\text{.}
\end{eqnarray*}%
The feature map corresponding to the Gaussian kernel $\Delta _{2}$ has
Lipschitz constant $\Delta _{2}^{-1}$. For $\mathbf{y},\mathbf{y}^{\prime
}\in 
%TCIMACRO{\U{211d} }%
%BeginExpansion
\mathbb{R}
%EndExpansion
^{nm_{1}}$ we obtain%
\begin{eqnarray*}
\sup_{v}\left( \sum_{i}\left( \left\langle v,\phi \left( y_{i}\right)
\right\rangle _{H}-\left\langle v,\phi \left( y_{i}^{\prime }\right)
\right\rangle _{H}\right) ^{2}\right) ^{1/2} &\leq &B_{2}\left(
\sum_{i}\left\Vert \phi \left( y_{i}\right) -\phi \left( y_{i}^{\prime
}\right) \right\Vert _{H}^{2}\right) ^{1/2} \\
&\leq &B_{2}\Delta _{2}^{-1}\left\Vert \mathbf{y}-\mathbf{y}^{\prime
}\right\Vert ,
\end{eqnarray*}%
so we have $L\left( \tciFourier ^{\prime },%
%TCIMACRO{\U{211d} }%
%BeginExpansion
\mathbb{R}
%EndExpansion
^{nm_{1}}\right) \leq B_{2}\Delta _{2}^{-1}$.

On the other hand%
\begin{eqnarray*}
\mathbb{E}\sup_{v}\sum_{i}\gamma _{i}\left( \left\langle v,\phi \left(
y_{i}\right) \right\rangle _{H}-\left\langle v,\phi \left( y_{i}^{\prime
}\right) \right\rangle _{H}\right) &\leq &B_{2}\mathbb{E}\left\Vert
\sum_{i=1}^{n}\gamma _{i}\left( \phi \left( \mathbf{y}_{i}\right) -\phi
\left( \mathbf{y}_{i}^{\prime }\right) \right) \right\Vert \\
&\leq &B_{2}\left( \sum_{i}\left\Vert \phi \left( y_{i}\right) -\phi \left(
y_{i}^{\prime }\right) \right\Vert _{H}^{2}\right) ^{1/2} \\
&\leq &B_{2}\Delta _{2}^{-1}\left\Vert \mathbf{y}-\mathbf{y}^{\prime
}\right\Vert ,
\end{eqnarray*}%
so we have $R\left( \tciFourier ^{\prime },%
%TCIMACRO{\U{211d} }%
%BeginExpansion
\mathbb{R}
%EndExpansion
^{nm_{1}}\right) \leq B_{2}\Delta _{2}^{-1}$. Furthermore 
\begin{equation*}
G\left( \tciFourier ^{\prime }\left( h_{0}\left( \mathbf{x}\right) \right)
\right) \leq B_{2}\sqrt{n},
\end{equation*}%
similar to the bound for $G\left( \mathcal{H}\left( \mathbf{x}\right)
\right) $.

For the composite network Theorem \ref{Theorem Main Main} gives us the bound%
\begin{equation*}
G\left( \tciFourier ^{\prime }\left( \mathcal{H}\left( \mathbf{x}\right)
\right) \right) \leq C_{1}B_{1}B_{2}\Delta _{2}^{-1}\sqrt{nm_{1}}%
+2C_{2}B_{1}B_{2}\sqrt{n}\Delta _{2}^{-1}+B_{2}\sqrt{n}.
\end{equation*}%
Dividing by $n$ and appealing to Theorem \ref{Theorem Rademacher bound} one
obtains the uniform bound: with probability at least $1-\delta $ we have for
every $h\in \mathcal{H}$ and every $f\in \tciFourier $ that%
\begin{eqnarray*}
\mathbb{E}f\left( h\left( X\right) \right)  &\leq &\frac{1}{n}\sum f\left(
h\left( X_{i}\right) \right) + \\
&&+\sqrt{\frac{2\pi }{n}}B_{2}\left( B_{1}\Delta _{2}^{-1}\left( C_{1}\sqrt{%
m_{1}}+2C_{2}\right) +1\right) +\sqrt{\frac{9\ln 2/\delta }{2n}}.
\end{eqnarray*}

Remarks.

1. One might object that the result depends heavily on the intermediate
dimension $m_{1}$ so that only a very classical relationship between sample
size and dimension is obtained. In this sense our result only works for
intermediate representations of rather low dimension. The mapping stages of $%
\mathcal{H}$ and $\tciFourier $ however include nonlinear maps to infinite
dimensional spaces.

2. Clearly the above choice of the Gaussian kernel is arbitrary. Any
positive semidefinite kernel can be used for the first mapping stage, and
the application of the chain rule requires only the Lipschitz property for
the second kernel in the definition of $\tciFourier $. The Gaussian kernel
was only chosen for definiteness.

3. Similarly the choice of the Hilbert-Schmidt norm as a regularizer for $W$
in the first mapping stage is arbitrary, one could equally use another
matrix norm. This would result in different bounds for $G\left( \mathcal{H}%
\left( \mathbf{x}\right) \right) $ and $D\left( \mathcal{H}\left( \mathbf{x}%
\right) \right) $, incurring a different dependency of our bound on $m_{1}$.

\subsection{Multitask learning}

As a second illustration we modify the above model to accommodate multitask
learning \cite{Baxter 1998}\cite{Baxter 2000}. Here one observes a $T\times n
$ sample $\mathbf{x=}\left( x_{ti}:1\leq t\leq T,1\leq i\leq n\right) $ $\in 
\mathcal{X}^{nT}$, where $\left( x_{ti}:1\leq i\leq n\right) $ is the sample
observed for the $t$-th task. We consider a two layer situation where the
bottom-layer $\mathcal{H}$ consists of functions $h:\mathcal{X\rightarrow 
%TCIMACRO{\U{211d} }%
%BeginExpansion
\mathbb{R}
%EndExpansion
}^{m}$, and the top layer function class is of the form 
\begin{equation*}
\tciFourier ^{T}=\left\{ x\in 
%TCIMACRO{\U{211d} }%
%BeginExpansion
\mathbb{R}
%EndExpansion
^{m_{1}}\mapsto \mathbf{f}\left( x\right) =\left( f_{1}\left( x\right)
,...,f_{T}\left( x\right) \right) \in 
%TCIMACRO{\U{211d} }%
%BeginExpansion
\mathbb{R}
%EndExpansion
^{T}:f_{t}\in \tciFourier \right\} ,
\end{equation*}%
where $\tciFourier $ is some class of functions mapping $%
%TCIMACRO{\U{211d} }%
%BeginExpansion
\mathbb{R}
%EndExpansion
^{m_{1}}$ to $%
%TCIMACRO{\U{211d} }%
%BeginExpansion
\mathbb{R}
%EndExpansion
$. The functions (or representations) of the bottom layer $\mathcal{H}$ are
optimized for the entire sample, in the top layer each function $f_{t}$ is
optimized for the represented data corresponding to the $t$-th task. In an
approach of empirical risk minimization one selects the composed function $%
\mathbf{\hat{f}}\circ \hat{h}$ which minimizes the task-averaged empirical
loss%
\begin{equation*}
\min_{\mathbf{f}\in \tciFourier ^{n},h\in \mathcal{H}}\frac{1}{nT}%
\sum_{i=1}^{n}\sum_{t=1}^{T}f_{t}\left( h\left( x_{it}\right) \right) .
\end{equation*}%
We wish to give a general explanation of the potential benefits of this
method over the separate learning of functions from $\tciFourier \circ 
\mathcal{H}$, as studied in the previous section. Clearly we must assume
that the tasks are related in the sense that the above minimum is small, so
any possible benefit can only be a benefit of improved estimation.

For the multitask model a result analogous to Theorem \ref{Theorem
Rademacher bound} is easily obtained (see e.g. \cite{Maurer 2006}). Let $%
\mathbf{X=}\left( X_{ti}\right) $ be a vector of independent random
variables with values in $\mathcal{X}$, where $X_{ti}$ is iid to $X_{tj}$
for all $ijt$, and let $X_{t}$ be iid to $X_{ti}$. Then with probability at
least $1-\delta $ we have for every $\mathbf{f}\in \tciFourier ^{n}$ and
every $h\in \mathcal{H}$%
\begin{equation*}
\frac{1}{T}\sum_{t}\mathbb{E}f_{t}\left( h\left( X_{t}\right) \right) \leq 
\frac{1}{nT}\sum_{ti}f_{t}\left( h\left( X_{ti}\right) \right) +\frac{\sqrt{%
2\pi }}{nT}G\left( \tciFourier ^{T}\circ \mathcal{H}\left( \mathbf{X}\right)
\right) +\sqrt{\frac{9\ln 2/\delta }{2nT}}.
\end{equation*}%
Here the left hand side is interpreted as the task averaged risk and 
\begin{equation*}
G\left( \tciFourier ^{T}\circ \mathcal{H}\left( \mathbf{x}\right) \right) =%
\mathbb{E}\sup_{\mathbf{f}\in \tciFourier ^{T},h\in \mathcal{H}%
}\sum_{ti}\gamma _{ti}f_{t}\left( h\left( x_{ti}\right) \right) .
\end{equation*}

For a definite example we take $\mathcal{H}$ and $\tciFourier $ as in the
previous section and observe that now there is an additional factor $T$ on
the sample size. This implies the modified bounds $G\left( \mathcal{H}\left( 
\mathbf{x}\right) \right) \leq B_{1}\sqrt{Tnm_{1}}$ and $D\left( \mathcal{H}%
\left( \mathbf{x}\right) \right) \leq 2B_{1}\sqrt{Tn}$. Also for $\mathbf{y},%
\mathbf{y}^{\prime }\in 
%TCIMACRO{\U{211d} }%
%BeginExpansion
\mathbb{R}
%EndExpansion
^{Tnm_{1}}$ with $y_{ti},y_{ti}^{\prime }\in 
%TCIMACRO{\U{211d} }%
%BeginExpansion
\mathbb{R}
%EndExpansion
^{m_{1}}$ 
\begin{eqnarray*}
\sup_{\mathbf{f}\in \tciFourier ^{T}}\sum_{ti}\left( f_{t}\left(
y_{ti}\right) -f_{t}\left( y_{ti}^{\prime }\right) \right) ^{2} &\leq
&\sum_{t}\sup_{f\in \tciFourier }\sum_{i}\left( f_{t}\left( y_{ti}\right)
-f_{t}\left( y_{ti}^{\prime }\right) \right) ^{2} \\
&\leq &L^{2}\left( \tciFourier ,%
%TCIMACRO{\U{211d} }%
%BeginExpansion
\mathbb{R}
%EndExpansion
^{nm_{1}}\right) \sum_{t}\sum_{i}\left\Vert y_{ti}-y_{ti}^{\prime
}\right\Vert ^{2},
\end{eqnarray*}%
so%
\begin{equation}
L\left( \tciFourier ^{T},%
%TCIMACRO{\U{211d} }%
%BeginExpansion
\mathbb{R}
%EndExpansion
^{Tnm_{1}}\right) =L\left( \tciFourier ,%
%TCIMACRO{\U{211d} }%
%BeginExpansion
\mathbb{R}
%EndExpansion
^{nm_{1}}\right) .  \label{Multitask rule for L}
\end{equation}%
Therefore $L\left( \tciFourier ^{T},%
%TCIMACRO{\U{211d} }%
%BeginExpansion
\mathbb{R}
%EndExpansion
^{Tnm_{1}}\right) \leq B_{2}\Delta _{2}^{-1}$. Similarly 
\begin{eqnarray*}
&&\mathbb{E}\sup_{\mathbf{f}\in \tciFourier ^{T}}\sum_{ti}\gamma _{ti}\left(
f_{t}\left( y_{ti}\right) -f_{t}\left( y_{ti}^{\prime }\right) \right) \\
&=&\sum_{t}\mathbb{E}\sup_{f\in \tciFourier }\sum_{i}\gamma _{ti}\left(
f_{t}\left( y_{ti}\right) -f_{t}\left( y_{ti}^{\prime }\right) \right) \\
&\leq &\sqrt{T}\left( \sum_{t}\left( \mathbb{E}\sup_{f\in \tciFourier
}\sum_{i}\gamma _{ti}\left( f_{t}\left( y_{ti}\right) -f_{t}\left(
y_{ti}^{\prime }\right) \right) \right) ^{2}\right) ^{1/2} \\
&\leq &\sqrt{T}\left( \sum_{t}R^{2}\left( \tciFourier ,%
%TCIMACRO{\U{211d} }%
%BeginExpansion
\mathbb{R}
%EndExpansion
^{nm_{1}}\right) \sum_{i}\left\Vert y_{ti}-y_{ti}^{\prime }\right\Vert
^{2}\right) ^{1/2} \\
&=&\sqrt{T}R\left( \tciFourier ,%
%TCIMACRO{\U{211d} }%
%BeginExpansion
\mathbb{R}
%EndExpansion
^{nm_{1}}\right) \left\Vert \mathbf{y}-\mathbf{y}^{\prime }\right\Vert .
\end{eqnarray*}%
We conclude that%
\begin{equation}
R\left( \tciFourier ^{T},%
%TCIMACRO{\U{211d} }%
%BeginExpansion
\mathbb{R}
%EndExpansion
^{nmT}\right) \leq \sqrt{T}R\left( \tciFourier ,%
%TCIMACRO{\U{211d} }%
%BeginExpansion
\mathbb{R}
%EndExpansion
^{nm}\right) ,  \label{Multitask rule for R}
\end{equation}%
in the given case%
\begin{equation*}
R\left( \tciFourier ^{T},%
%TCIMACRO{\U{211d} }%
%BeginExpansion
\mathbb{R}
%EndExpansion
^{nmT}\right) \leq \sqrt{T}B_{2}\Delta _{2}^{-1}.
\end{equation*}%
Also%
\begin{eqnarray}
G\left( \tciFourier ^{T}\left( h_{0}\left( \mathbf{x}\right) \right) \right)
&=&\mathbb{E}\sup_{\mathbf{f}\in \tciFourier ^{T}}\sum_{ti}\gamma
_{ti}f_{t}\left( h_{0}\left( x_{ti}\right) \right)  \notag \\
&=&\sum_{t}\mathbb{E}\sup_{f\in \tciFourier }\sum_{i}\gamma _{ti}f\left(
h_{0}\left( x_{ti}\right) \right)  \notag \\
&\leq &TG\left( \tciFourier \left( h_{0}\left( \mathbf{x}\right) \right)
\right) ,  \label{G_h_0 bound}
\end{eqnarray}%
so that here $G\left( \tciFourier ^{T}\left( h_{0}\left( \mathbf{x}\right)
\right) \right) \leq B_{2}T\sqrt{n}$. The chain rule then gives%
\begin{equation*}
G\left( \tciFourier \circ \mathcal{H}\left( \mathbf{x}\right) \right) \leq
C_{1}B_{1}B_{2}\Delta _{2}^{-1}\sqrt{Tnm_{1}}+\left( 2C_{2}B_{1}\Delta
_{2}^{-1}+1\right) B_{2}T\sqrt{n},
\end{equation*}%
where the first term represents the complexity of $\mathcal{H}$ and the
second that of $\tciFourier ^{T}$. Dividing by $nT$ we obtain as the
dominant term for the estimation error%
\begin{equation*}
C_{1}B_{1}B_{2}\Delta _{2}^{-1}\sqrt{\frac{m_{1}}{nT}}+\frac{\left(
2C_{2}B_{1}\Delta _{2}^{-1}+1\right) B_{2}}{\sqrt{n}}.
\end{equation*}%
This reproduces a general property of multitask learning \cite{Baxter 2000}:
in the limit $T\rightarrow \infty $ the contribution of the common
representation (including the intermediate dimension $m_{1}$) to the
estimation error vanishes. There remains only the cost of estimating the
task specific functions in the top layer.

We have obtained this result for a very specific model. The relations (\ref%
{Multitask rule for L}), (\ref{Multitask rule for R}) and (\ref{G_h_0 bound}%
) for $L\left( \tciFourier ^{T}\right) $, $R\left( \tciFourier ^{T}\right) $
and $G\left( \tciFourier ^{T}\left( h_{0}\left( \mathbf{x}\right) \right)
\right) $ are nevertheless independent of the exact model, so the chain rule
could be made the basis of a fairly general result about multitask feature
learning.

\subsection{Iteration of the bound}

We apply the chain rule to multi-layered or "deep" learning machines, a
subject which appears to be of some current interest. Here we have function
classes $\tciFourier _{1},...,\tciFourier _{K}$, where $\tciFourier _{k}$
consists of functions $f:%
%TCIMACRO{\U{211d} }%
%BeginExpansion
\mathbb{R}
%EndExpansion
^{n_{k-1}}\rightarrow 
%TCIMACRO{\U{211d} }%
%BeginExpansion
\mathbb{R}
%EndExpansion
^{n_{k}}$ and we are interested in the generalization properties of the
composite class 
\begin{equation*}
\tciFourier _{K}\circ ...\circ \tciFourier _{1}=\left\{ \mathbf{x}\in 
%TCIMACRO{\U{211d} }%
%BeginExpansion
\mathbb{R}
%EndExpansion
^{n_{0}}\mapsto f_{K}\left( f_{K-1}\left( ...\left( f_{1}\left( \mathbf{x}%
\right) \right) \right) \right) :f_{k}\in \tciFourier _{k}\right\} .
\end{equation*}%
To state our bound we are given some sample $\mathbf{x}$ in $%
%TCIMACRO{\U{211d} }%
%BeginExpansion
\mathbb{R}
%EndExpansion
^{n_{0}}$ and introduce the notation%
\begin{eqnarray*}
Y_{0} &=&\mathbf{x} \\
Y_{k} &=&\tciFourier _{k}\left( Y_{k-1}\right) =\tciFourier _{k}\circ
...\circ \tciFourier _{1}\left( \mathbf{x}\right) \subseteq 
%TCIMACRO{\U{211d} }%
%BeginExpansion
\mathbb{R}
%EndExpansion
^{n_{k}},\text{ for }k>0 \\
G_{k} &=&\min_{\mathbf{y}\in Y_{k-1}\left( \mathbf{x}\right) }G\left(
\tciFourier _{k}\left( \mathbf{y}\right) \right) .
\end{eqnarray*}%
Under the convention that the product over an empty index set is $1$,
induction shows that 
\begin{equation*}
G\left( Y_{K}\right) \leq \sum_{k=1}^{K}\left(
C_{1}^{K-k}\prod_{j=k+1}^{K}L\left( \tciFourier _{j}\right) \right) \left(
C_{2}D\left( Y_{k-1}\right) R\left( \tciFourier _{k}\right) +G_{k}\right) .
\end{equation*}%
Clearly the large constants are prohibitive for any useful quantitative
prediction of generalization, but qualitative statements are possible.
Observe for example that, apart from $C_{1}$ and the Lipschitz constants,
each layer only makes an additive contribution to the overall complexity.
More specifically, for machine learning with a sample of size $n$, we can
make the assumptions $n_{k}=nm_{k}$, where $m_{k}$ is the dimension of the $%
k $-th intermediate representations, and it is reasonable to postulate $\max
\left\{ G_{k},D\left( Y_{k}\right) R\left( \tciFourier _{k}\right) \right\}
\leq Cn^{p}$, where $C$ is some constant not depending on $n$ and $p$ is
some exponent $p<1$ (for multi-layered kernel machines with Lipschitz
feature maps we would have $p=1/2$ - see above). Then the above expression
is of order $n^{p}$ and Theorem \ref{Theorem Rademacher bound} yields a
uniform law of large numbers for the multi-layered class, with a uniform
bound on the estimation error decreasing as $n^{p-1}$.

\section{Proof of Theorem \protect\ref{Theorem generic chaining}\label%
{Section proof of generic chaining}}

Talagrand has proved the following result (\cite{Talagrand 1992}).

\begin{theorem}
\label{Theorem majorizing measure}There are universal constants $r\geq 2$
and $C$ such that for every finite $Y\subset 
%TCIMACRO{\U{211d} }%
%BeginExpansion
\mathbb{R}
%EndExpansion
^{n}$ there is an increasing sequence of partitions $\mathcal{A}_{k}$ of $Y$
and a probability measure $\mu $ on $Y$, such that, whenever $A\in \mathcal{A%
}_{k}$ then $D\left( A\right) \leq 2r^{-k}$ and%
\begin{equation*}
\sup_{\mathbf{y}\in Y}\sum_{k>k_{0}}^{\infty }r^{-k}\sqrt{\ln \frac{1}{\mu
\left( A_{k}\left( \mathbf{y}\right) \right) }}\leq C~G\left( Y\right) ,
\end{equation*}%
where $A_{k}\left( \mathbf{y}\right) $ denotes the unique member of $%
\mathcal{A}_{k}$ which contains $\mathbf{y}$, and $k_{0}$ is the largest
integer $k$ satisfying%
\begin{equation*}
2r^{-k}\geq D\left( Y\right) =\sup_{\mathbf{y},\mathbf{y}^{\prime }\in
Y}\left\Vert \mathbf{y}-\mathbf{y}^{\prime }\right\Vert
\end{equation*}
\end{theorem}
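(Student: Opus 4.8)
The plan is to prove this by Talagrand's partitioning scheme, with Sudakov minoration supplying the probabilistic input that ties the geometry of $Y$ to the Gaussian average $G\left( Y\right) $. The only place where Gaussianity enters quantitatively is the following lower bound, which I would establish first: if $S\subseteq Y$ contains $N$ points that are pairwise separated by at least $\varepsilon $ in Euclidean distance, then $G\left( S\right) \geq c\,\varepsilon \sqrt{\ln N}$ for a universal constant $c>0$. (Note that since $\mathbb{E}\left\langle \mathbf{\gamma },\mathbf{y}_{0}\right\rangle =0$, the quantity $G\left( S\right) $ already equals the centered supremum $\mathbb{E}\sup _{\mathbf{y}}\left\langle \mathbf{\gamma },\mathbf{y}-\mathbf{y}_{0}\right\rangle $, so Sudakov's inequality applies directly.) This estimate is the engine that will pay for every branching in the partition.

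Next I would set up the recursive construction of the partitions $\mathcal{A}_{k}$ together with the measure $\mu $. Starting from the trivial one-cell partition at scale $k_{0}$, where $2r^{-k_{0}}\geq D\left( Y\right) $, I refine each cell $A\in \mathcal{A}_{k}$ of diameter at most $2r^{-k}$ by choosing a maximal $r^{-(k+1)}$-separated subset of $A$, assigning each point of $A$ to a nearest chosen center, and declaring the resulting clusters to be the children of $A$ in $\mathcal{A}_{k+1}$. Each child then has diameter at most $2r^{-(k+1)}$, so the diameter condition is preserved and the partitions are increasing by construction. The measure $\mu $ is built multiplicatively down the tree, a child receiving a prescribed fraction of its parent's mass, so that the increment $\ln \left( 1/\mu \left( A_{k+1}\left( \mathbf{y}\right) \right) \right) -\ln \left( 1/\mu \left( A_{k}\left( \mathbf{y}\right) \right) \right) $ is controlled by the logarithm of the number of children of $A_{k}\left( \mathbf{y}\right) $.

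The heart of the argument is a growth (superadditivity) inequality that makes the telescoping work. The key claim is that if a cell $A$ at scale $k$ splits into $m$ children whose centers are $r^{-(k+1)}$-separated, then
\[
G\left( A\right) \geq c\,r^{-(k+1)}\sqrt{\ln m}+\min_{A^{\prime }}G\left( A^{\prime }\right) ,
\]
the minimum ranging over the children $A^{\prime }$ of $A$. The first term is Sudakov minoration applied to the separated centers; the second rests on the genuinely Gaussian fact that over well-separated clusters the process decouples enough for the within-cluster complexity to be added to the between-cluster selection cost. Combining this with the measure construction turns $\sum_{k>k_{0}}r^{-k}\sqrt{\ln \left( 1/\mu \left( A_{k}\left( \mathbf{y}\right) \right) \right) }$ into a quantity controlled by the accumulated decrements of $G$ along the chain $\cdots \subseteq A_{k+1}\left( \mathbf{y}\right) \subseteq A_{k}\left( \mathbf{y}\right) \subseteq \cdots $, and since $G$ of a singleton vanishes, these decrements sum to at most a universal multiple of $G\left( Y\right) $, yielding the bound $\leq C\,G\left( Y\right) $ uniformly in $\mathbf{y}$.

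The hard part will be exactly this growth step together with the bookkeeping that keeps the supremum over \emph{all} $\mathbf{y}$ — that is, over all branches of the partition tree — simultaneously controlled. The growth condition charges the $\sqrt{\ln m}$ cost against the \emph{minimal} child, whereas $A_{k+1}\left( \mathbf{y}\right) $ is a prescribed child, so a naive telescoping along one path does not close; Talagrand's partitioning theorem must track functional values across the whole tree and interleave the choice of refinement with the decrement of $G$ so that no path overspends. Making Sudakov minoration operate locally and recursively, and securing \emph{universal} constants $r$ and $C$ independent of the dimension $n$, the cardinality $\left\vert Y\right\vert $, and the geometry of $Y$, is what upgrades a routine Dudley-type chaining estimate into this theorem, and it is also where the large constants inherited by Theorem~\ref{Theorem Main Main} ultimately originate.
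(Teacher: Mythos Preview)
The paper does not prove Theorem~\ref{Theorem majorizing measure}; it is quoted as a result of Talagrand, with the citation \cite{Talagrand 1992}, and used as a black box in the proof of Theorem~\ref{Theorem generic chaining}. There is therefore no ``paper's own proof'' to compare your proposal against.

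That said, your outline is a faithful high-level summary of Talagrand's 1992 argument: Sudakov minoration as the Gaussian input, a recursive partitioning into well-separated pieces at geometrically decreasing scales, a multiplicatively defined measure along the tree, and a growth inequality that trades branching cost against decrements of a Gaussian functional. You also correctly identify the genuine difficulty, namely that the growth step bounds the Sudakov term against the \emph{minimal} child while the chain for a given $\mathbf{y}$ passes through a \emph{prescribed} child, so a straight telescoping does not close and one needs Talagrand's more careful bookkeeping (tracking a functional over the whole tree and choosing the refinement and the mass assignment together). One point to sharpen if you carry this out: the functional one actually iterates in Talagrand's scheme is not $G\left( A\right) $ itself but a decorated version of it (in \cite{Talagrand 1992} it carries an additive parameter), precisely so that the induction hypothesis is strong enough to absorb the min-versus-prescribed discrepancy you flagged. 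Your displayed growth inequality with the bare $G$ on both sides is morally right but will not quite close the induction as written.
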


Observe that $2r^{-k_{0}}\geq D\left( Y\right) $, so we can assume $\mathcal{%
A}_{k_{0}}=\left\{ Y\right\} $. As explained in \cite{Talagrand 1992}, the
above Theorem is equivalent to the existence of a measure $\mu $ on $Y$ such
that 
\begin{equation*}
\sup_{\mathbf{y}\in Y}\int_{0}^{\infty }\sqrt{\ln \frac{1}{\mu \left(
B\left( \mathbf{y}\text{,}\epsilon \right) \right) }}d\epsilon \leq
C~G\left( Y\right) ,
\end{equation*}%
where $C$ is some other universal constant and $B\left( \mathbf{y}\text{,}%
\epsilon \right) $ is the ball of radius $\epsilon $ centered at $\mathbf{y}$%
. The latter is perhaps the more usual formulation of the majorizing measure
theorem.

We will use Talagrand's theorem to prove Theorem \ref{Theorem generic
chaining}, but before please note the inequality 
\begin{equation}
D\left( Y\right) \leq \sqrt{2\pi }G\left( Y\right) ,  \label{D-G inequality}
\end{equation}%
which follows from 
\begin{eqnarray*}
\sup_{\mathbf{y},\mathbf{y}^{\prime }\in Y}\left\Vert \mathbf{y}-\mathbf{y}%
^{\prime }\right\Vert &=&\sqrt{\frac{\pi }{2}}\sup_{\mathbf{y},\mathbf{y}%
^{\prime }}{{\mathbb{E}}}\left\vert \left\langle \mathbf{\gamma },\mathbf{y}-%
\mathbf{y}^{\prime }\right\rangle \right\vert \\
&\leq &\sqrt{\frac{\pi }{2}}{{\mathbb{E}}}\sup_{\mathbf{y},\mathbf{y}%
^{\prime }}\left\vert \left\langle \mathbf{\gamma },\mathbf{y}-\mathbf{y}%
^{\prime }\right\rangle \right\vert =\sqrt{\frac{\pi }{2}}{{\mathbb{E}}}%
\sup_{\mathbf{y},\mathbf{y}^{\prime }}\left\langle \mathbf{\gamma },\mathbf{y%
}-\mathbf{y}^{\prime }\right\rangle .
\end{eqnarray*}%
In the first equality we used the fact that $\left\Vert v\right\Vert =\sqrt{%
\pi /2}{{\mathbb{E}}}\left\vert \left\langle \mathbf{\gamma },v\right\rangle
\right\vert $ for any vector $v$.\bigskip

\begin{proof}[of Theorem \protect\ref{Theorem generic chaining}.]
Let $\mu $ and $\mathcal{A}_{k}$ be as determined for $Y$ by Theorem \ref%
{Theorem majorizing measure}. First we claim that for any $\delta \in \left(
0,1\right) $%
\begin{equation}
\Pr \left\{ \exists \mathbf{y}\in Y:X_{\mathbf{y}}-X_{\mathbf{y}%
_{0}}>\sum_{k>k_{0}}r^{-k+1}\sqrt{8\ln \left( \frac{2^{k-k_{0}}K}{\mu \left(
A\left( \mathbf{y}\right) \right) \delta }\right) }\right\} <\delta .
\label{claim}
\end{equation}%
For every $k>k_{0}$ and every $A\in \mathcal{A}_{k}$ let $\pi \left(
A\right) $ be some element chosen from $A$. We set $\pi \left( Y\right) =%
\mathbf{y}_{0}$. We denote $\pi _{k}\left( \mathbf{y}\right) =\pi \left(
A_{k}\left( \mathbf{y}\right) \right) $. This implies the chaining identity:%
\begin{equation*}
X_{\mathbf{y}}-X_{\mathbf{y}_{0}}=\sum_{k>k_{0}}\left( X_{\pi _{k}\left( 
\mathbf{y}\right) }-X_{\pi _{k-1}\left( \mathbf{y}\right) }\right) \text{.}
\end{equation*}%
For $k>k_{0}$ and $A\in \mathcal{A}_{k}$ use $\hat{A}$ to denote the unique
member of $\mathcal{A}_{k-1}$ such that $A\subseteq \hat{A}$. Since for $%
A\in \mathcal{A}_{k}$ both $\pi \left( A\right) $ and $\pi \left( \hat{A}%
\right) $ are members of $\hat{A}\in \mathcal{A}_{k-1}$ we must have $%
\left\Vert \pi \left( A\right) -\pi \left( \hat{A}\right) \right\Vert \leq
2r^{-k+1}$. Also note $\pi _{k-1}\left( \mathbf{y}\right) =\pi \left( \hat{A}%
_{k}\left( \mathbf{y}\right) \right) =\pi \left( \left( A_{k}\left( \pi
_{k}\left( \mathbf{y}\right) \right) \right) \symbol{94}\right) $. For $%
k\geq k_{0}$ we define a function $\xi _{k}:\mathcal{A}_{k}\rightarrow 
%TCIMACRO{\U{211d} }%
%BeginExpansion
\mathbb{R}
%EndExpansion
_{+}$ as follows: 
\begin{equation*}
\xi _{k}\left( A\right) =r^{-k+1}\sqrt{8\ln \left( \frac{2^{k-k_{0}}K}{\mu
\left( A\right) \delta }\right) }.
\end{equation*}%
To prove the claim we have to show that%
\begin{equation*}
\Pr \left\{ \exists \mathbf{y}\in Y:X_{\mathbf{y}}-X_{\mathbf{y}%
_{0}}-\sum_{k>k_{0}}\xi _{k}\left( A_{k}\left( \mathbf{y}\right) \right)
>0\right\} <\delta .
\end{equation*}%
Denote the left hand side of this inequality with $P$. By the chaining
identity%
\begin{equation*}
P\leq \Pr \left\{ \exists \mathbf{y}:\sum_{k>k_{0}}\left( X_{\pi _{k}\left( 
\mathbf{y}\right) }-X_{\pi _{k-1}\left( \mathbf{y}\right) }-\xi _{k}\left(
A_{k}\left( \mathbf{y}\right) \right) \right) >0\right\} .
\end{equation*}%
If the sum is positive, at least one of the terms has to be positive, so%
\begin{equation*}
P\leq \Pr \left\{ \exists \mathbf{y},k>k_{0}:\left( X_{\pi _{k}\left( 
\mathbf{y}\right) }-X_{\pi _{k-1}\left( \mathbf{y}\right) }-\xi _{k}\left(
A_{k}\left( \mathbf{y}\right) \right) \right) >0\right\} .
\end{equation*}%
The event on the right hand side can also be written as%
\begin{equation*}
\left\{ \exists k>k_{0},\exists A\in \mathcal{A}_{k}:X_{\pi \left( A\right)
}-X_{\pi \left( \hat{A}\right) }>\xi _{k}\left( A\right) \right\} ,
\end{equation*}%
and a union bound gives%
\begin{eqnarray*}
P &\leq &\sum_{k>k_{0}}\sum_{A\in \mathcal{A}_{k}}\Pr \left\{ X_{\pi \left(
A\right) }-X_{\pi \left( \hat{A}\right) }>\xi _{k}\left( A\right) \right\} 
\\
&\leq &\sum_{k>k_{0}}\sum_{A\in \mathcal{A}_{k}}K\exp \left( \frac{-\xi
_{k}\left( A\right) ^{2}}{2\left\Vert \pi \left( A\right) -\pi \left( \hat{A}%
\right) \right\Vert ^{2}}\right)  \\
&\leq &\sum_{k>k_{0}}\sum_{A\in \mathcal{A}_{k}}K\exp \left( \frac{-\xi
_{k}\left( A\right) ^{2}}{2\left( 2r^{-k+1}\right) ^{2}}\right) ,
\end{eqnarray*}%
where we used the bound (\ref{Process Tailbound}) in the second and the
bound on $\left\Vert \pi \left( A\right) -\pi \left( \hat{A}\right)
\right\Vert $ in the third inequality. Using the definition of $\xi
_{k}\left( A\right) $ the last expression is equal to%
\begin{equation*}
\delta \sum_{k>k_{0}}\frac{1}{2^{k-k_{0}}}\sum_{A\in \mathcal{A}_{k}}\mu
\left( A\right) =\delta \sum_{k>k_{0}}\frac{1}{2^{k-k_{0}}}=\delta ,
\end{equation*}%
because $\mu $ is a probability measure. This establishes the claim.

Now, using $\sqrt{a+b}\leq \sqrt{a}+\sqrt{b}$ for $a,b\geq 0$, with
probability at least $1-\delta $%
\begin{eqnarray*}
\sup_{\mathbf{y}}X_{\mathbf{y}}-X_{\mathbf{y}_{0}} &\leq
&r\sum_{k>k_{0}}r^{-k}\sqrt{8\ln \left( \frac{1}{\mu \left( A_{k}\left( 
\mathbf{y}\right) \right) }\right) } \\
&&\text{ \ \ \ \ \ \ \ }+r^{-k_{0}+1}\sum_{k>0}r^{-k+1}\sqrt{8\ln \left( 
\frac{2^{k}K}{\delta }\right) } \\
&\leq &\sqrt{8}rC~G\left( Y\right) +\sqrt{8}r^{-k_{0}+1}\sum_{k>0}\sqrt{k}%
r^{-k+1}\sqrt{\ln \left( \frac{2K}{\delta }\right) },
\end{eqnarray*}%
where we used Talagrand's theorem and the fact that $K>1$. By the definition
of $k_{0}$ we have $r^{-k_{0}+1}\leq r^{2}D\left( Y\right) /2$, so this is
bounded by%
\begin{equation*}
C^{\prime \prime \prime }G\left( Y\right) +C^{\prime \prime \prime \prime
}D\left( Y\right) \sqrt{\ln \left( \frac{2K}{\delta }\right) },
\end{equation*}%
with $C^{\prime \prime \prime }=\sqrt{8}rC$ and $C^{\prime \prime \prime
\prime }=\sqrt{8}\left( r^{2}/2\right) \sum_{k>0}\sqrt{k}r^{-k+1}$.
Converting the last bound into a tail bound and integrating we obtain%
\begin{eqnarray*}
\mathbb{E}\left[ \sup_{\mathbf{y}}X_{\mathbf{y}}-X_{\mathbf{y}_{0}}\right] 
&\leq &C^{\prime \prime \prime }G\left( Y\right) +C^{\prime \prime \prime
\prime }D\left( Y\right) \left( \sqrt{\ln 2K}+\frac{\sqrt{\pi }}{2}\right) 
\\
&\leq &C^{\prime \prime \prime }G\left( Y\right) +3C^{\prime \prime \prime
\prime }D\left( Y\right) \sqrt{\ln 2K} \\
&\leq &\left( C^{\prime \prime \prime }+3\sqrt{2\pi \ln 2}C^{\prime \prime
\prime \prime }\right) G\left( Y\right) +3C^{\prime \prime \prime \prime
}D\left( Y\right) \sqrt{\ln K},
\end{eqnarray*}%
where we again used $K\geq 1$ in the second inequality and (\ref{D-G
inequality}) in the last inequality. This gives the conclusion with $%
C^{\prime }=C^{\prime \prime \prime }+3\sqrt{2\pi \ln 2}C^{\prime \prime
\prime \prime }$ and $C^{\prime \prime }=3C^{\prime \prime \prime \prime }$. 
$\blacksquare $\bigskip 
\end{proof}

\end{document}